
\typeout{IJCAI--22 Instructions for Authors}


\documentclass{article}
\pdfpagewidth=8.5in
\pdfpageheight=11in
\usepackage{ijcai22}

\usepackage{times}
\usepackage{soul}
\usepackage{url}
\usepackage[hidelinks]{hyperref}
\usepackage[utf8]{inputenc}
\usepackage[small]{caption}
\usepackage{graphicx}
\usepackage{amsmath}
\usepackage{amsthm}
\usepackage{booktabs}
\usepackage{algorithm}
\usepackage{algorithmic}
\urlstyle{same}

\usepackage{microtype}
\usepackage{subfigure}
\usepackage{textcomp}
\usepackage{amsfonts}
\usepackage{multirow}
\usepackage{paralist} 
\usepackage{color} 

\usepackage{array}
\usepackage{threeparttable}
\usepackage{bbm}
\usepackage{amssymb}
\usepackage{mathrsfs} 

\usepackage{paralist}



\newtheorem{theorem}{Theorem}
\newtheorem{assumption}{Assumption}

\newtheorem{corollary}{Corollary}
\newtheorem{remark}{Remark}
\newtheorem{lemma}{Lemma}




\pdfinfo{
/TemplateVersion (IJCAI.2022.0)
}

\title{Domain Generalization through the Lens of Angular Invariance }

\author{
Yujie Jin$^{1 \ *}$
\and
Xu Chu$^{2}$ \thanks{The first two authors contributed to this work equally.} \and
Yasha Wang$^{1\ \dag}$ \And
Wenwu Zhu$^{2}$ \thanks{Corresponding authors.} 
\affiliations
$^1$ Peking University, Beijing, China\\
$^2$ Tsinghua University, Beijing, China\\
\emails
\{jyj17pku,wangyasha\}@pku.edu.cn,
\{chu\_xu,wwzhu\}@tsinghua.edu.cn
}

\begin{document}

\maketitle

\begin{abstract}
  Domain generalization (DG) aims at generalizing a classifier trained on multiple source domains to an unseen target domain with domain shift. A common pervasive theme in existing DG literature is domain-invariant representation learning with various invariance assumptions. However, prior works restrict themselves to an impractical assumption for real-world challenges: If a mapping induced by a deep neural network (DNN) could align the source domains well, then such a mapping aligns a target domain as well. In this paper, we simply take DNNs as feature extractors to relax the requirement of distribution alignment. Specifically, we put forward a novel angular invariance and the accompanied norm shift assumption. Based on the proposed term of invariance, we propose a novel deep DG method dubbed Angular Invariance Domain Generalization Network (AIDGN). The optimization objective of AIDGN is developed with a von-Mises Fisher (vMF) mixture model. Extensive experiments on multiple DG benchmark datasets validate the effectiveness of the proposed AIDGN method.
\end{abstract}

\section{Introduction}
Over the past few years, supervised deep learning has achieved remarkable success on many challenging visual tasks~\cite{krizhevsky2012imagenet,long2015fully,he2016resnet}. An underlying assumption of the popular supervised DL methods is the identically distributed condition, namely, the generating functions of training data and testing data are identical. We say a \textit{domain shift} exists between the training data (\textit{source domain}) and the testing data (\textit{target domain}) if the identical condition is violated. When there is a domain shift, the favored empirical risk minimization (ERM) learning \cite{vapnik1999erm} would be ill-posed, since the empirical risk over the training data is not guaranteed to converge to the risk of the testing data asymptotically.

\begin{figure}[htbp]
    \centering
    \subfigure[visualization of domains]{
    \begin{minipage}{4.05cm}
    \centering
        \label{erm-domain}
        \includegraphics[width=4.05cm]{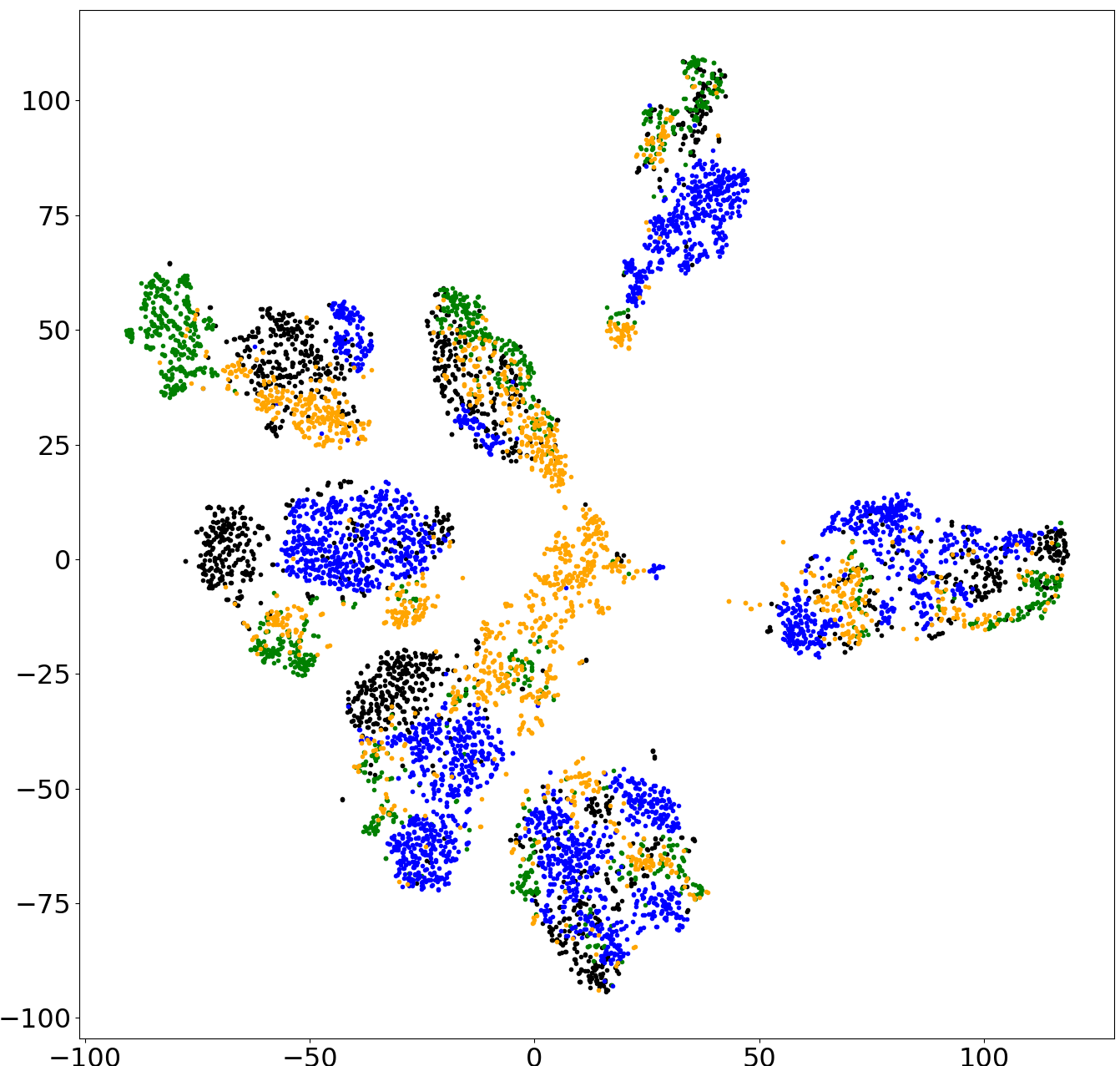}
    \end{minipage}
    }
    \subfigure[visualization of classes]{
    \begin{minipage}{4.05cm}
    \centering
        \label{erm-class}
        \includegraphics[width=4.05cm]{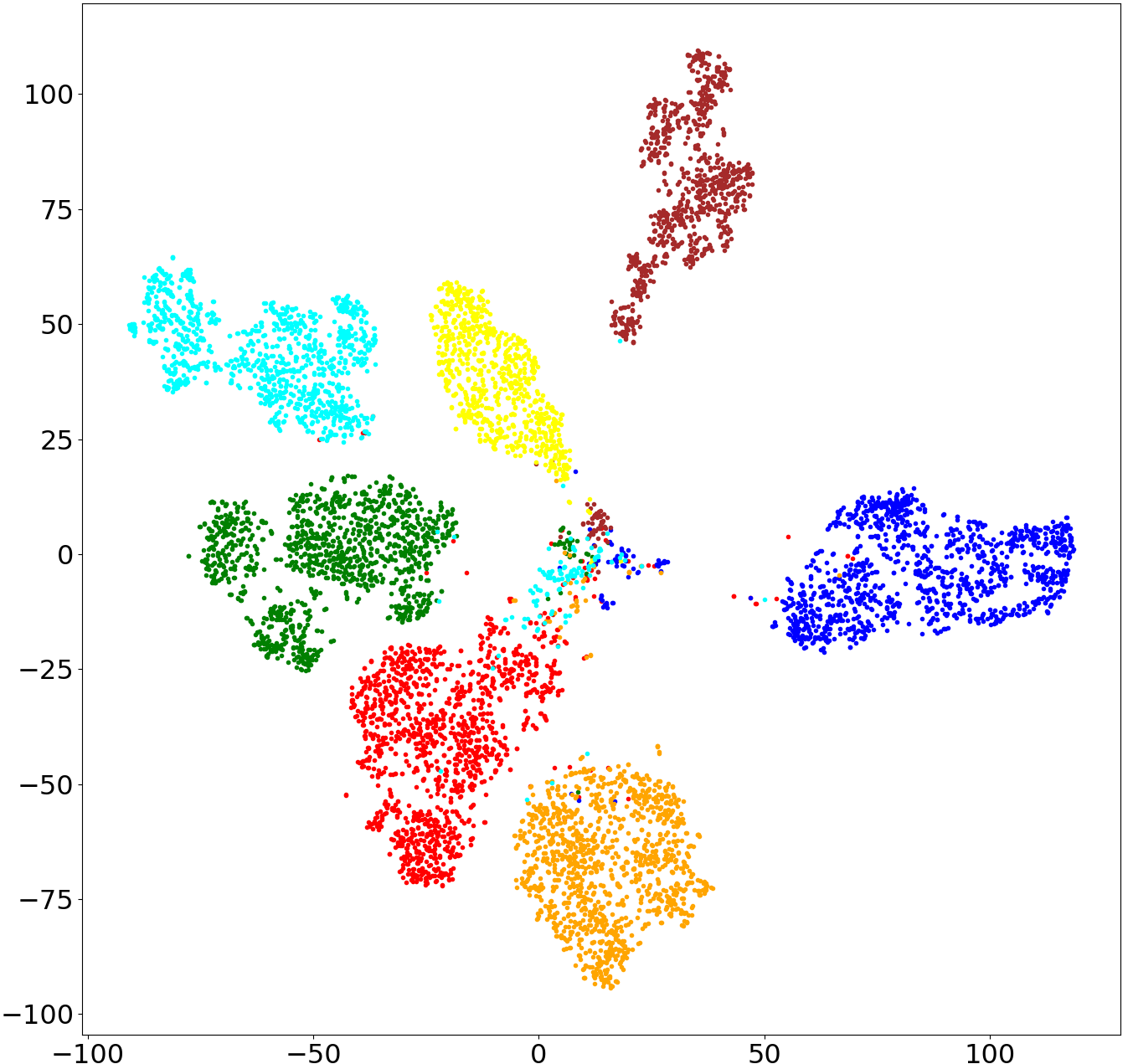}
    \end{minipage}
    }
\caption{Feature visualization for a model trained with ERM on the PACS dataset: (a) different colors indicate  different domains, source domains include cartoon (black), photo (green) and sketch (blue) while the target domain is art-painting (orange); (b) different colors represent different classes. 
\textit{Best viewed in color (Zoom in for details).}}
\label{erm-visual}
\end{figure}

Domain generalization (DG) aims at generalizing the model trained on multiple source domains to perform well on an unseen target domain with domain shift~\cite{blanchard2011generalizing}
. The inductive setting of DG assumes no target data is available during training, differentiating DG from the transductive domain adaptation methodss~\cite{ben2007analysis}
, thus making DG more 
practical and challenging. 

Intuitively, in order to carry out a successful knowledge transfer from ``seen" source domains to an ``unseen" target domain, there have to be some underlying similarities among these domains. From a theoretical standpoint, invariance among the distributions of domains should be investigated. To this end, a predominant stream in DG is domain-invariant representation learning, with various invariance and shift assumptions such as \textit{covariate shift} assumption~\cite{li2018mmd}, \textit{conditional shift} assumption~\cite{li2018cdann}, and \textit{label shift} assumption~\cite{liu2021ijcai21}. 
However, prior works overemphasize the importance of joint distribution alignment under an impractical assumption: an injective mapping (implies a tendency of losing class discriminative information) aligning the source joint distributions on the induced space could align the target joint distribution as well. An easy counter-example is a constant mapping that aligns any distributions on the induced space. 
Recently, theoretical analysis has revealed a fundamental trade-off between achieving well-alignment and low joint error \cite{zhao2019learning} on various domains. Empirically, a study~\cite{gulrajani2020domainbed} observed limited performance gain of those invariant learning methods over ERM under a fair evaluation protocol, demonstrating the difficulty of balancing alignment and generalization.

 In this paper, we take a step back from pursuing 
 domain alignment. We model the relative difference between the target domain and each source domain instead. Specifically, we put forward a novel paradigm of domain shift assumption: the \textbf{angular invariance} and \textbf{norm shift} assumption. The proposed assumption says that under the polar reparameterization~\cite{blumenson1960derivation}, the relative difference between the DNN push forward measures is captured by the norm parameters and invariant to the angular parameters. The insight of angular invariance and norm shift is inspired by the acknowledged fact that the internal layers in DNNs capture high-level semantic concepts (e.g., eye, tail)~\cite{zeiler2014visualizing}
 , which are connected to category-related discriminative features. The angular parameters capture the correlations between the high-level semantic concepts, while the norm parameter captures the magnitude of the high-level semantic concepts.
 In the practice of DG, the DNN feature mapping pre-trained on ImageNet is fine-tuned on the source domains. Therefore the semantic concepts memorized by the internal layers are biased to the source domains, and leading to higher-level of neuron activations. Hence we expect a difference of norm distribution of latent representations between a source domain and a target domain. Meanwhile the correlations between high-level concepts in a fixed category are relatively stable. Thus we expect invariant angular distributions across different domains. 
 We do t-SNE feature visualization on the PACS dataset for an ERM-trained model to motivate and substantiate our assumption. Fig~\ref{erm-domain} shows that the norm distribution of the target domain (orange) significantly differs from that of source domains, while the distributions over angular coordinates are homogeneous. Fig~\ref{erm-class} shows that the learned class clusters are separated well by the angular parameters. 

Apart from the novel angular invariance and norm shift assumption, our methodological contribution is manifested by a novel deep DG algorithm called \textbf{A}ngular \textbf{I}nvariance \textbf{D}omain \textbf{G}eneralization \textbf{N}etwork (\textbf{AIDGN}). The designing principle of the AIDGN method is a minimal level of modification of ERM learning under modest intensity distributional assumptions, such as assuming the distribution families of maximum entropy. Concretely, (1) We show that the angular invariance enables us to compute the marginals over the norm coordinate to compare probability density functions of the target distribution and each source distribution in the latent space. Moreover, we compute the relative density ratio analytically based on the maximum entropy principle~\cite{jaynes1957information}. (2) Within a von-Mises Fisher (vMF) mixture model \cite{gopal2014mises}, we connect the target posterior with the density of each mixture component, re-weighted by the relative density ratio mentioned above and the label densities. (3) We derive a practical AIDGN loss from the target posterior. The deduction adopts the maximum entropy principle for label densities and solves a constrained optimization problem.

We conduct extensive experiments on multiple DG benchmarks to validate the effectiveness of the proposed method and demonstrate that it achieves superior performance over the existing baselines. Moreover, we show that AIDGN effectively balances the intra-class compactness and the inter-class separation,  and thus reduces the uncertainty of predictions.


\section{Related Work}
A common pervasive theme in DG literature is domain-invariant representation learning, which is based on the idea of aligning feature distributions among different source domains, with the hope that the learned  invariance can be generalized to target domains. For instance, ~\cite{li2018mmd} achieved distribution alignment in the latent space of an autoencoder by using adversarial learning and the maximum mean discrepancy criteria. ~\cite{li2018cdann} matched conditional feature distributions across domains, enabling alignment of multimodal distributions for all class labels. ~\cite{liu2021ijcai21} exploited both the conditional and label shifts, and proposed a Bayesian variational inference framework with posterior alignment to reduce both the shifts simultaneously. However, existing works overemphasize the importance of joint distribution alignment which might hurt class discriminative information. Different from them, we propose a novel angular invariance as well as the accompanied norm shift assumption, and develop a learning framework based on the proposed term of invariance.

Meta-learning was introduced into the DG community by~\cite{li2018mldg} and has drawn increasing attention. The main idea is to divide the source domains into meta-train-domains and meta-test-domain to simulate domain shift, and regulate the model trained on meta-train-domains to perform well on meta-test-domain. Data augmentation has also been exploited for DG, which augments the source data to increase the diversity of training data distribution. For instance, ~\cite{wang2020mixup} employed the mixup~\cite{zhang2018mixup_origin} technique across multiple domains and trained model on the augmented heterogeneous mixup distribution, which implicitly enhanced invariance to domain shifts.

Different from the above DG methods which focus on training phase, test-time adaptation is a class of methods focusing on test phase, i.e., adjusting the model using online unlabeled data and correcting its prediction by itself during test time. ~\cite{wang2020Tent} proposed fully test-time adaptation, which modulates the BN parameters by minimizing the prediction entropy using stochastic gradient descent. ~\cite{iwasawa2021T3A} proposed a test-time classifier adjustment module for DG, which updates pseudo-prototypes for each class using online unlabeled data augmented by the base classifier trained on the source domains. We empirically show that AIDGN can effectively make the decision boundaries of all categories separate from each other and reduce the uncertainty of predictions, so that the existing test-time adaptation methods based on entropy minimization is not necessary.

We also show that our proposed AIDGN theoretically justifies and generalizes the recent proposed MAG loss for face recognition~\cite{meng2021magface}.

\section{Methodology}
In this section, we first formulate the DG problem. Secondly, we explain the proposed angular invariance and norm shift assumption. Lastly, we introduce our angular invariance domain generalization network (AIDGN). 
(\textit{Proofs for this section can be found in Appendix A of the supplementary material.})

\subsection{Problem Formulation}
Give $N$ source domains $\{\mathcal{P}_{\mathcal{X \times Y}}^{d}\}_{d=1}^N$ subject to $\mathcal{P}_{\mathcal{X \times Y}}^{d}\neq \mathcal{P}_{\mathcal{X \times Y}}^{d'}$ for $\{d,d'\} \subset \{1,2,\ldots,N\}$, and a target domain $\mathcal{P}_{\mathcal{X \times Y}}^{t}$ on the input-output space $\mathcal{X\times Y}$. DG tasks assume $\mathcal{P}_{\mathcal{X \times Y}}^{d}\neq \mathcal{P}_{\mathcal{X \times Y}}^{t}$ for $d=1,\ldots,N$ and focus on $C$-class single label classification tasks. 
Let $\mathcal{H}=\{h_{\theta}|\theta \in \Theta\}$ be a hypothesis space parametrized by $\theta \in \Theta$. 
For $d=1,\ldots,N$, there are $n_d$ independently identically distributed instances $\lbrace (\textbf{x}_i^d, y_i^d)\rbrace_{i=1}^{n_d}$ sampled from the $d$-th source domain $\mathcal{P}_{\mathcal{X \times Y}}^{d}$.
The goal of DG is to output a hypothesis $\hat{h}\in\mathcal{H}$ such that the target risk is minimized for a given loss $\ell\left(h(\cdot),\cdot\right)$, i.e.,
\begin{equation}\label{eq:goal}
    \hat{h}=\arg \min_{h\in\mathcal{H}} \mathbb{E}_{\mathcal{P}_{\mathcal{X \times Y}}^{t}}\left[\ell(h(\textbf{X}),Y)\right].
\end{equation}

\subsection{Angular Invariance and Norm Shift}
Celebrated for capturing empirical universal visual features, convolutional neural networks (CNNs) pre-trained on the ImageNet dataset~\cite{deng2009imagenet} have been adopted by a wide range of visual tasks. To take full advantage of a pre-trained CNN $\pi$, we regard $\pi$ as a feature extractor from the original input space $\mathcal{X}$ to a latent representation space $\mathcal{Z}$. Then a hypothesis $h$ comprises a feature extractor $\pi$ and a classifier $f$, i.e., $h=f\circ\pi$.

Studies have shown that each dimension of a CNN $\pi$ output is capturing some abstract concepts (e.g., eye, tail)~\cite{zeiler2014visualizing}
. Considering the relationship among concepts of the same class objects in the real-world is stable, the angular invariance and norm shift assumption says that the $\pi$-mapped feature of different domains are invariant in the angular coordinates, but varies in the norm coordinate. For simplicity, we introduce a random variable $D$ indexing the $d$-th source domain if $D=d$. The proposed assumption states as follows.

\begin{assumption}[angular invariance]\label{assump:angularInvariance}
Suppose the marginal distributions $\{\mathcal{P}_{\mathcal{X}}^{d}\}_{d=1}^N\cup\{\mathcal{P}_{\mathcal{X}}^t\}$ on the input space $\mathcal{X}$ are continuous. Let $\pi:\mathcal{X}\rightarrow \mathcal{Z}\subset \mathbb{R}^n$ be a feature extraction mapping such that the $\pi$-push forward probability density funcitons (p.d.f.s) $\{p^{d}(\textbf{z})\}_{d=1}^N\cup\{p^t(\textbf{z})\}$ exist in the latent space $\mathcal{Z}$. Let $(r,\phi_1,\ldots,\phi_{n-1})=g(z_1,\ldots,z_n)$ be the polar reparametrization \cite{blumenson1960derivation} of the Cartesian coordinates $\textbf{z}=(z_1,\ldots,z_n)$. The angular invariance assumption for DG is quantified by the equations: Let $\boldsymbol{\phi}=(\phi_1,\ldots,\phi_{n-1})$, for $d=1,\ldots,N$, 
\begin{equation}\label{eq:angular invariance}
        p(\boldsymbol{\phi}|Y,D=d)=p^t(\boldsymbol{\phi}|Y)
\end{equation}
\end{assumption}

The polar reparametrization $g(\cdot)$ is bijective and $p(r,\boldsymbol{\phi}|Y,D)=p(\boldsymbol{\phi}|Y,D)p(r|\boldsymbol{\phi},Y,D)$, therefore the difference between the target conditional p.d.f. (c.p.d.f.) $p^t(\textbf{z}|Y)$ and the $d$-th source c.p.d.f. $p(\textbf{z}|Y,d)$ is captured by the difference between the norm c.p.d.f.s $p^t(r|\boldsymbol{\phi},Y)$ and $p(r|\boldsymbol{\phi},d,Y)$.


\begin{theorem}\label{theorem:Reduce}
Suppose $support(p^t(\textbf{z}))\subset support(p(\textbf{z}|D))$. If the angular invariance assumption~\ref{assump:angularInvariance} holds, then for $d=1,\ldots,N$, $p^t(\textbf{z}|Y)/p(\textbf{z}|D=d,Y)$ exists and satisfies
\begin{equation}\label{eq:ratioweight}
 \frac{p^t(\textbf{z}|Y)}{p(\textbf{z}|d,Y)}=\frac{ p^t(r|\boldsymbol{\phi},Y)}{p(r|\boldsymbol{\phi},d,Y)}\triangleq w(r|\boldsymbol{\phi},d,y).
\end{equation}
\end{theorem}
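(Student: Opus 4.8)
The plan is to reduce the Cartesian density ratio to a polar one through a change of variables, and then cancel the common factors using the shared reparametrization and the angular invariance assumption. I would organize the argument in three mechanical steps preceded by an existence check.

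First I would settle existence. The hypothesis $support(p^t(\textbf{z}))\subset support(p(\textbf{z}|D))$ is precisely an absolute-continuity condition: wherever the target density is positive, the source density conditioned on $D=d$ is also positive. This guarantees that the pointwise ratio $p^t(\textbf{z}|Y)/p(\textbf{z}|d,Y)$ is well-defined and finite on the support of the numerator, so that $w$ exists as a measurable function rather than a formal symbol. This is the Radon--Nikodym-style justification for the left-hand side of the claimed identity.

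Next I would apply the change of variables. Since $g$ is a bijection, the Cartesian and polar densities relate by $p(\textbf{z}|\cdot)=p(r,\boldsymbol{\phi}|\cdot)\,|\det J_g(\textbf{z})|$, where $(r,\boldsymbol{\phi})=g(\textbf{z})$ and $|\det J_g(\textbf{z})|$ is the Jacobian determinant. The conceptual crux is that the feature extractor $\pi$, and hence the reparametrization $g$, is identical for the target and for every source domain, so $|\det J_g(\textbf{z})|$ is the same in numerator and denominator and cancels, leaving $p^t(\textbf{z}|Y)/p(\textbf{z}|d,Y)=p^t(r,\boldsymbol{\phi}|Y)/p(r,\boldsymbol{\phi}|d,Y)$. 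I would then invoke the factorization $p(r,\boldsymbol{\phi}|Y,D)=p(\boldsymbol{\phi}|Y,D)\,p(r|\boldsymbol{\phi},Y,D)$ stated just above the theorem to split each polar density into an angular and a radial part, and apply the angular invariance assumption $p(\boldsymbol{\phi}|Y,D=d)=p^t(\boldsymbol{\phi}|Y)$ to cancel the angular factors. What remains is exactly $p^t(r|\boldsymbol{\phi},Y)/p(r|\boldsymbol{\phi},d,Y)=w(r|\boldsymbol{\phi},d,y)$.

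The main obstacle I expect is not the algebra but the bookkeeping around the existence condition: one must verify that the support containment stated for the marginals $p^t(\textbf{z})$ and $p(\textbf{z}|D)$ genuinely propagates to the class-conditional densities $p^t(\textbf{z}|Y)$ and $p(\textbf{z}|d,Y)$, and then, after reparametrization, to the conditional radial densities $p(r|\boldsymbol{\phi},d,Y)$, so that every denominator appearing in the derivation is nonzero wherever it is used and no set of positive target measure is ignored. The Jacobian cancellation and the angular cancellation are immediate once the shared-reparametrization observation is made, so the rigor lives almost entirely in the absolute-continuity argument.
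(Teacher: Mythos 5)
Your proposal is correct and follows essentially the same route as the paper's proof: a change of variables in which the Jacobian determinant of the shared polar reparametrization, being a function of $(r,\boldsymbol{\phi})$ alone and identical for the target and every source domain, cancels in the density ratio, followed by the factorization $p(r,\boldsymbol{\phi}|Y,D)=p(\boldsymbol{\phi}|Y,D)\,p(r|\boldsymbol{\phi},Y,D)$ and cancellation of the angular factors via the angular invariance assumption. Your closing remark that the support condition, stated for the marginals $p^t(\textbf{z})$ and $p(\textbf{z}|D)$, must be checked to propagate to the class-conditional and radial densities flags a gap the paper's own proof passes over silently, so your version is if anything the more careful of the two.
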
 

The theorem~\ref{theorem:Reduce} says that under the angular invariance assumption, we may reduce the degrees of freedom of comparing target and source c.p.d.f.s from $n$ to $1$. However, the aporia of DG is that no target instances could be observed during training. Thus an additional assumption is essential to overcome the zero-sample dilemma. Following the maximum entropy principle~\cite{jaynes1957information}, we adopt the following distributional assumptions on the conditional target and source norms.

\begin{assumption}[maximum entropy norm distribution]\label{assump:maximumEntropy} Conditioned on $Y=y$ and $\boldsymbol{\phi}$, (I) The target norm in space $\mathcal{Z}$ follows a continuous uniform distribution\footnote{The uniform distribution is the maximum (differential) entropy distribution for a continuous random variable with a fixed range.} $Uni[\alpha_{y,\boldsymbol{\phi}},\beta_{y,\boldsymbol{\phi}}]$ with $\delta_{y,\boldsymbol{\phi}}=\beta_{y,\boldsymbol{\phi}}-\alpha_{y,\boldsymbol{\phi}}>0$, i.e., $p^t(r|y,\boldsymbol{\phi};\alpha_{y,\boldsymbol{\phi}},\beta_{y,\boldsymbol{\phi}})=1/\delta_{y,\boldsymbol{\phi}}$. (II) The $d$-th source domain norm in space $\mathcal{Z}$ follows an exponential distribution\footnote{The exponential distribution is the maximum (differential) entropy distribution with positive support and a fixed expectation.} $Exp[1/\mu_{d,y,\boldsymbol{\phi}}],\mu_{d,y,\boldsymbol{\phi}}>0$, i.e., $p(r|d,y;\mu_{d,y,\boldsymbol{\phi}})=1/\mu_{d,y,\boldsymbol{\phi}} exp(r/\mu_{d,y,\boldsymbol{\phi}})$.
\end{assumption}

With the angular invariance and the maximum entropy assumption, we can compare $p^t(\textbf{z}|y)$ and $p(\textbf{z}|d,y)$ analytically.
\begin{corollary}\label{cor:RatioanalyticForm}
When assumption~\ref{assump:angularInvariance} and assumption~\ref{assump:maximumEntropy} hold,
\begin{equation}\label{eq:RatioanalyticForm}
    w(r|\boldsymbol{\phi},d,y)
    =\frac{\mu_{d,y} exp(\frac{r}{\mu_{d,y,\boldsymbol{\phi}}})}{\delta_{y,\boldsymbol{\phi}}} \approx \frac{\mu_{d,y,\boldsymbol{\phi}}+r}{\delta_{y,\boldsymbol{\phi}}}.
\end{equation}
\end{corollary}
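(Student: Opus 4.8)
The plan is to obtain the exact factor $w$ by substituting the explicit maximum-entropy densities of Assumption~\ref{assump:maximumEntropy} into the reduced one-dimensional ratio furnished by Theorem~\ref{theorem:Reduce}, and then to linearize the resulting exponential to reach the stated approximation. The whole argument is essentially a direct computation once the right inputs are assembled, so the emphasis is on invoking the prior results in the correct order and flagging the one modeling step that carries an approximation.

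First I would invoke Theorem~\ref{theorem:Reduce}. Under the angular invariance of Assumption~\ref{assump:angularInvariance}, that theorem collapses the full $n$-dimensional density ratio $p^t(\mathbf{z}|Y)/p(\mathbf{z}|d,Y)$ down to the scalar norm ratio $w(r|\boldsymbol{\phi},d,y)=p^t(r|\boldsymbol{\phi},y)/p(r|\boldsymbol{\phi},d,y)$. This is the sole point at which the angular-invariance hypothesis is used; everything downstream is arithmetic on one-dimensional norm densities.

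Next I would substitute the two conditional norm densities prescribed by Assumption~\ref{assump:maximumEntropy}. The target norm is uniform, so $p^t(r|\boldsymbol{\phi},y)=1/\delta_{y,\boldsymbol{\phi}}$ on $[\alpha_{y,\boldsymbol{\phi}},\beta_{y,\boldsymbol{\phi}}]$, while the source norm is exponential with mean $\mu_{d,y,\boldsymbol{\phi}}$, i.e. $p(r|\boldsymbol{\phi},d,y)=\tfrac{1}{\mu_{d,y,\boldsymbol{\phi}}}\exp(-r/\mu_{d,y,\boldsymbol{\phi}})$. Forming the quotient and cancelling the $1/\mu_{d,y,\boldsymbol{\phi}}$ and $1/\delta_{y,\boldsymbol{\phi}}$ factors yields the exact closed form $w=\frac{\mu_{d,y,\boldsymbol{\phi}}}{\delta_{y,\boldsymbol{\phi}}}\exp(r/\mu_{d,y,\boldsymbol{\phi}})$, which is the first equality of \eqref{eq:RatioanalyticForm}. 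At this step I would verify the support hypothesis $support(p^t)\subset support(p)$ required by Theorem~\ref{theorem:Reduce}: the uniform is supported on $[\alpha_{y,\boldsymbol{\phi}},\beta_{y,\boldsymbol{\phi}}]$ and the exponential on $[0,\infty)$, so the inclusion holds whenever $\alpha_{y,\boldsymbol{\phi}}\ge 0$, which in particular guarantees the denominator is strictly positive everywhere the ratio is evaluated, so $w$ is well defined.

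Finally, to reach the right-hand side of \eqref{eq:RatioanalyticForm} I would apply the first-order Taylor expansion $\exp(x)\approx 1+x$ with $x=r/\mu_{d,y,\boldsymbol{\phi}}$, so that $\mu_{d,y,\boldsymbol{\phi}}\exp(r/\mu_{d,y,\boldsymbol{\phi}})\approx \mu_{d,y,\boldsymbol{\phi}}+r$ and hence $w\approx(\mu_{d,y,\boldsymbol{\phi}}+r)/\delta_{y,\boldsymbol{\phi}}$. The main (indeed the only) delicate point is justifying this linearization rather than any analytic difficulty: it is accurate precisely in the regime $r/\mu_{d,y,\boldsymbol{\phi}}\ll 1$, where the neglected quadratic remainder $\frac{r^2}{2\mu_{d,y,\boldsymbol{\phi}}}$ is small. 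I would therefore state explicitly that the exact equality holds unconditionally, whereas the approximation is a first-order surrogate—the one later exploited to build a tractable AIDGN loss—valid when the norm is small relative to the source mean.
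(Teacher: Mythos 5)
Your proposal is correct and takes exactly the route the paper intends: the paper gives no separate proof of this corollary, treating it as the direct substitution of the Assumption~2 densities ($p^t(r|\boldsymbol{\phi},y)=1/\delta_{y,\boldsymbol{\phi}}$ and the exponential source density) into the scalar ratio of Theorem~1, followed by the first-order expansion $e^{x}\approx 1+x$ — precisely your computation, including your verification of the support hypothesis and the validity regime $r/\mu_{d,y,\boldsymbol{\phi}}\ll 1$. Note that in doing so you silently corrected a sign typo in the paper's statement of Assumption~2, which writes the exponential density with exponent $+r/\mu_{d,y,\boldsymbol{\phi}}$ instead of the correct $-r/\mu_{d,y,\boldsymbol{\phi}}$; your version is the one consistent with the corollary's exact form $\mu_{d,y,\boldsymbol{\phi}}\exp(r/\mu_{d,y,\boldsymbol{\phi}})/\delta_{y,\boldsymbol{\phi}}$.
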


Recalling that DG aims to learn classifiers, next we consider the behavior when $Y$ varies, i.e., $p^t(y|\textbf{z})$ and $p(y|\textbf{z},D)$.

\subsection{The AIDGN Method}
Before formally introducing the proposed AIDGN method, We discuss the motivation of adopting the von-Mises Fisher (vMF) mixture model. Specifically, we inspect $p(\textbf{z}|Y)$ and $p(\boldsymbol{\phi}|Y)$, where $\boldsymbol{\phi}=(\phi_1,\ldots,\phi_{n-1})$ is the angular coordinates after a polar reparameterization of $\textbf{z}$. By the law of total probability, the source c.p.d.f. $p^s(\textbf{z}|Y)$ decomposes as
\begin{equation}\label{eq:totalSum}
    \begin{aligned}
    p^s(\textbf{z}|Y)&=\sum_{d=1}^N p(\textbf{z}|d,Y)p(d|Y)\propto \sum_{d=1}^N p(r,\boldsymbol{\phi}|d,Y)p(d|Y)
    \end{aligned}
\end{equation}
When the angular invariance and norm shift assumption~\ref{assump:angularInvariance} holds, the factors $p(\textbf{z}|d,Y)$ and $p(d|Y)$ might varies w.r.t. the domain index $d$. Therefore modeling $p^s(\textbf{z}|Y)$ urges the modeling of $p(\textbf{z}|d,Y)$ w.r.t. $d=1,\ldots,N$. In sharp contrast, the angular invariance guarantees the modeling of the source c.p.d.f. $p^s(\boldsymbol{\phi}|Y)$ is as easy as any $p(\boldsymbol{\phi}|Y,d)$ for $d=1,\ldots,N$. By eq.~\eqref{eq:angular invariance}, $p^s(\boldsymbol{\phi}|Y)=\sum_{d=1}^N p(\boldsymbol{\phi}|Y,d)p(d|Y)=p(\boldsymbol{\phi}|Y,d)\sum_{d=1}^N p(d|Y)=p(\boldsymbol{\phi}|Y,d)$.
Therefore, the much simpler assumption choice is a model related to $p^s(\boldsymbol{\phi})$.
Notice that the angular coordinates of the latent representation $\textbf{z}$ are invariant to the $L_2$ normalization $\mathcal{G}(\textbf{z})$, i.e., $\mathcal{G}\circ g: (r,\boldsymbol{\phi})\mapsto (1,\boldsymbol{\phi})$, where $g$ is the polar reparameterization and $\mathcal{G}(\textbf{z})$ is
\begin{equation}\label{eq:l2normalization}
    \mathcal{G}(\textbf{z})=\textbf{z}/\sqrt{z_1^2+z_2^2+\ldots+z_n^2}.
\end{equation}
The formulation of the proposed AIDGN begins with the vMF mixture assumption on the $L_2$ normalized $\mathcal{G}(\textbf{Z})$.

\begin{assumption}[von-Mises Fisher mixture]\label{assump:vMFMixture}
Suppose that the assumption~\ref{assump:angularInvariance} is satisfied, let $\textbf{Z}^*\triangleq \mathcal{G}(\textbf{Z})$ be the $L_2$ normalization of the latent representation $\textbf{Z}=\pi(\textbf{X})$. In the $C$-category DG classification setting, $\textbf{Z}^*$ is assumed to follow a von-Mises Fisher mixture distribution,
\begin{equation}\label{vMFmixture}
    p(\textbf{z}^*)=\sum_{y=1}^C p(Y=y)p(\textbf{z}^*|y)=\sum_{y=1}^C p(y)\mathcal{V}(\textbf{z}^*;\textbf{w}_y,\kappa),
\end{equation}
such that the posterior p.d.f. $p(y|\textbf{z},D)$ is invariant to the $L_2$ normalized posterior $p(y|\textbf{z}^*)$ 
\begin{equation}
    p(y|\textbf{z},D)=p(y|\textbf{z}^*).
\end{equation}
The $y$-th component of the mixture $p(\textbf{z}^*|y)=\mathcal{V}(\textbf{z}^*;\textbf{w}_y,\kappa)$ is the p.d.f. of a vMF distribution,
\begin{equation}\label{eq:vMFdistribution}
\mathcal{V}(\textbf{z}^*;\textbf{w}_y,\kappa)=\frac{\kappa^{n/2-1}}{(2\pi)^{n/2}I_{n/2-1}(\kappa)}exp(\kappa \textbf{w}_y^{\top}\textbf{z}^*),
\end{equation}
where $I_n$ denotes the first kind Bessel function at order $n$.
\end{assumption}

Within the vMF mixture model, the angular invariance induces a relationship between the target posterior $p^t(y|\textbf{z})$ and the source mixture components $p(\textbf{z}^*|y)$. 
\begin{theorem}\label{theorem:posterior}
When the assumptions~\ref{assump:angularInvariance} and~\ref{assump:vMFMixture} are satisfied, then 

\begin{small}
\begin{equation}\label{eq:posterior}
\begin{aligned}
        p^t(y|\textbf{z})&=\frac{p(\textbf{z}^*|Y=y)w(r|\boldsymbol{\phi},d,y)P(Y=y)}{\sum_{c=1}^C p(\textbf{z}^*|Y=c)w(r|\boldsymbol{\phi},d,c)P(Y=c) }\\
        &=\frac{exp(\kappa \textbf{w}_y^{\top}\textbf{z}^*)w(r|\boldsymbol{\phi},d,y)P(Y=y)}{\sum_{c=1}^C exp(\kappa \textbf{w}_c^{\top}\textbf{z}^*)w(r|\boldsymbol{\phi},d,c)P(Y=c) }.
\end{aligned}
\end{equation}
\end{small}
\end{theorem}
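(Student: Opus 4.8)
The plan is to start from Bayes' rule for the target posterior and then peel off the two kinds of dependence in turn: the norm dependence with Theorem~\ref{theorem:Reduce}, and the domain/angular dependence with the vMF invariance constraint, checking at each stage that the factors I discard are common to every class label and therefore cancel in the normalized posterior.

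First I would write, by Bayes' rule on the target domain,
\[
p^t(y|\textbf{z}) = \frac{p^t(\textbf{z}|Y=y)P^t(Y=y)}{\sum_{c=1}^C p^t(\textbf{z}|Y=c)P^t(Y=c)}.
\]
Theorem~\ref{theorem:Reduce} then lets me substitute $p^t(\textbf{z}|Y=y)=w(r|\boldsymbol{\phi},d,y)\,p(\textbf{z}|d,Y=y)$ for an arbitrary fixed source index $d$, so that every appearance of the unobservable target conditional is replaced by a source conditional times the analytically available weight $w$. This already isolates the factor $w(r|\boldsymbol{\phi},d,y)$ that appears in the claimed formula.

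The central step is to convert the source conditional $p(\textbf{z}|d,y)$ into the vMF mixture component $p(\textbf{z}^*|y)=\mathcal{V}(\textbf{z}^*;\textbf{w}_y,\kappa)$. Here I invoke the invariance constraint $p(y|\textbf{z},D=d)=p(y|\textbf{z}^*)$ from Assumption~\ref{assump:vMFMixture}. Expanding both sides by Bayes' rule — the left with the source marginals, the right with the mixture marginals — and solving for the source conditional gives
\[
p(\textbf{z}|d,y) = p(\textbf{z}^*|y)\cdot\frac{p(\textbf{z}|d)}{p(\textbf{z}^*)}\cdot\frac{P(Y=y)}{P(Y=y|d)}.
\]
The factor $p(\textbf{z}|d)/p(\textbf{z}^*)$ is independent of the class label, so it cancels between the numerator and every term of the denominator in the posterior; the surviving label terms combine with $P^t(Y=y)$. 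Under the (implicit) hypothesis that the label marginals are shared across domains, so that $P(Y=y|d)=P^t(Y=y)=P(Y=y)$, these collapse to the single prior $P(Y=y)$ and I recover the first displayed line of eq.~\eqref{eq:posterior}. Finally I substitute the explicit density \eqref{eq:vMFdistribution}: its normalizing constant $\kappa^{n/2-1}/((2\pi)^{n/2}I_{n/2-1}(\kappa))$ does not depend on $y$, so it too cancels in the ratio, leaving only $\exp(\kappa\textbf{w}_y^{\top}\textbf{z}^*)$, which is the second line.

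I expect the middle step to be the main obstacle, namely rigorously justifying the replacement of $p(\textbf{z}|d,y)$ by $p(\textbf{z}^*|y)$. This demands care because $\textbf{z}^*=\mathcal{G}(\textbf{z})$ lives on the unit sphere while $\textbf{z}$ lives in $\mathbb{R}^n$, so one must argue that the Jacobian of the normalization map $\mathcal{G}\circ g$ together with the marginals $p(\textbf{z}|d)$ and $p(\textbf{z}^*)$ are all label-independent and hence drop out of the ratio. One must also pin down the bookkeeping of the label priors $P(Y=y|d)$ versus $P^t(Y=y)$, which is precisely where a no-label-shift (label-invariance) assumption is needed to reduce everything to the single symbol $P(Y=y)$ that appears in the statement.
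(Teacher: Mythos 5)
Your proposal is correct and takes essentially the same route as the paper's proof: Bayes' rule on the target posterior, Theorem~\ref{theorem:Reduce} to trade $p^t(\textbf{z}|y)$ for $w(r|\boldsymbol{\phi},d,y)\,p(\textbf{z}|d,y)$, the invariance $p(y|\textbf{z},D)=p(y|\textbf{z}^*)$ (which the paper rewrites as the joint-density identity $p(\textbf{z},y|D)/p(\textbf{z})=p(\textbf{z}^*,y)/p(\textbf{z}^*)$ and threads through a chain of ratio manipulations rather than solving for $p(\textbf{z}|d,y)$ as you do) to pass to the vMF component, and cancellation of the class-independent Bessel normalizing constant. Your explicit isolation of the factor $P(Y=y)/P(Y=y|d)$ merely makes visible the no-label-shift convention the paper uses silently (its intermediate step $p(y|d)=p(Y=y)/p(D=d)$ is only harmless because that constant cancels across numerator and denominator), so your bookkeeping is, if anything, slightly more careful than the original.
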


The eq.~\eqref{eq:posterior} in theorem \ref{theorem:posterior} promises an optimization objective when there are enough observations for each source domain. However, the sample size is often  prohibitive. Even if the sample complexity could be satisfied by the sample size, inputting too many source instances in mini-batches is not practical for a DNN. On the other hand, the empirical estimation $\hat{w}(r|\boldsymbol{\phi},d,c)=\hat{p}^t(\textbf{z}|\boldsymbol{\phi},c)/\hat{p}(\textbf{z}|\boldsymbol{\phi},d,c)$ goes to infinity when an instance $\textbf{z}$ with polar coordinates $(\boldsymbol{\phi},r)$ is not observed in the $c$-th class and $d$-th domain ($\hat{p}(\textbf{z}|\boldsymbol{\phi},d,c)=0$). 

The above practical concerns motivate us to modify eq.~\eqref{eq:posterior}. For ease of illustration, we rewrite the vMF exponent factors $\textbf{w}_c^{\top}\textbf{z}^*\triangleq cos(\theta_c), c=1,\ldots,C$. Considering the relative modification effect of the magnitude of observed empirical estimates $\hat{w}(r|\boldsymbol{\phi},d,y)$ on $cos(\theta_y)$, a finite $\hat{w}(r|\boldsymbol{\phi},d,y)$ endows a relatively (compared to an infinite empirical estimate $\hat{w}(r|\boldsymbol{\phi},d,c)$) small weight of the corresponding $cos(\theta_y)$, and thus a small credibility on $\theta_y$. The proposed AIDGN reinterprets such credibility on $\theta_y$ as a perturbation to $\theta_y$ such that $exp(\kappa cos(\theta_y+\gamma w(r|\boldsymbol{\phi},d,y)))\approx exp(\kappa cos(\theta_y))w(r|\boldsymbol{\phi},d,y)$. For instances living on the mass manifold that are not observed, AIDGN reinterprets the infinite empirical estimate $\hat{w}(r|\boldsymbol{\phi},d,c)$ as absolute confidence on the corresponding $\theta_c$, and no perturbation is added. 
In the light of reinterpreting $\hat{w}(r|\boldsymbol{\phi},d,\cdot)$ as perturbation on $\theta$, enforcing $p^t(y|\textbf{z})$ to be close to $1$ reduces to enforcing 
\begin{equation}
    cos(\theta_y+\gamma w(r|\boldsymbol{\phi},d,y))p(y)\geq cos(\theta_c)p(c), 
\end{equation}
where $c\neq y$ is indexing any wrong class of $(\textbf{z},y)$. To derive the optimization objective of AIDGN, we adopt the maximum entropy principle again on the label distribution.

\begin{theorem}\label{theorem:Loss}
Let $\Delta$ be a $C-1$ simplex and let $\textbf{P}=(P_1,\ldots,P_C) \in \Delta$ denote a distribution of classes, then

\begin{small}
\begin{equation}\label{eq:labelMEP}
\begin{aligned}
    \textbf{P}^*&=(\frac{e^{\kappa(cos(\theta_1)-cos(\theta_y+\gamma w(r|\boldsymbol{\phi},d,y)))}}{\sum_{c=1}^C e^{\kappa(cos(\theta_c)-cos(\theta_y+\gamma w(r|\boldsymbol{\phi},d,y)))}},\\
    &\ldots,\frac{e^{\kappa(cos(\theta_C)-cos(\theta_y+\gamma w(r|\boldsymbol{\phi},d,y)))}}{\sum_{c=1}^C e^{\kappa(cos(\theta_c)-cos(\theta_y+\gamma w(r|\boldsymbol{\phi},d,y)))}})\\
    =&\arg \max_{\textbf{P}\in \Delta} \kappa [ \sum_{c\neq y}^C P_c( cos(\theta_c)-cos(\theta_y+\gamma w(r|\boldsymbol{\phi},d,y)))\\
    &+P_y(cos(\theta_y+\gamma w(r|\boldsymbol{\phi},d,y)
     -cos(\theta_y+\gamma w(r|\boldsymbol{\phi},d,y)))]\\
     &+\sum_{c=1}^C P_c(-log P_c).
\end{aligned}
\end{equation}
\end{small}
At $\textbf{P}=\textbf{P}^*$, the maximum of the right hand side (r.h.s.) is
\begin{equation}\label{eq:AIDGNloss}
\begin{aligned}
    &\ell_{AIDGN}(\textbf{z},y)\\
    &=-log \frac{e^{\kappa cos(\theta_y+\gamma w(r|\boldsymbol{\phi},d,y))}}{e^{\kappa cos(\theta_y+\gamma w(r|\boldsymbol{\phi},d,y))}+\sum_{c\neq y}e^{\kappa cos(\theta_y)}}.
\end{aligned}
\end{equation}
\end{theorem}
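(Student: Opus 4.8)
The plan is to read eq.~\eqref{eq:labelMEP} as a textbook constrained maximum-entropy (Gibbs variational) problem and to recover the loss eq.~\eqref{eq:AIDGNloss} as its associated log-partition value. First I would fix the abbreviation $w\triangleq w(r|\boldsymbol{\phi},d,y)$ and collapse the bracketed linear functional into a single sum: writing $a_y\triangleq \cos(\theta_y+\gamma w)-\cos(\theta_y+\gamma w)=0$ and $a_c\triangleq \cos(\theta_c)-\cos(\theta_y+\gamma w)$ for $c\neq y$, the objective becomes $J(\textbf{P})=\kappa\sum_{c=1}^C P_c a_c-\sum_{c=1}^C P_c\log P_c$ on the simplex $\Delta$. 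The only genuine piece of bookkeeping here is that the true-class contribution to the \emph{linear} part vanishes by construction ($a_y=0$), yet it still contributes the factor $e^{0}=1$ to the normalizer; tracking this asymmetry between the margin-perturbed true class and the plain wrong classes is exactly what produces the margin structure of $\ell_{AIDGN}$.

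Next I would establish that $\textbf{P}^*$ is the unique global maximizer. Since $-\sum_c P_c\log P_c$ is strictly concave and $\kappa\sum_c P_c a_c$ is linear, $J$ is strictly concave on the convex compact set $\Delta$; moreover $\partial J/\partial P_c\to+\infty$ as $P_c\to 0^+$, so the maximizer lies in the relative interior and the nonnegativity constraints are inactive. Hence it suffices to impose only $\sum_c P_c=1$ via a single multiplier $\lambda$ and solve the stationarity conditions of $L=J(\textbf{P})-\lambda(\sum_c P_c-1)$. Setting $\partial L/\partial P_c=\kappa a_c-\log P_c-1-\lambda=0$ gives $P_c\propto e^{\kappa a_c}$, and normalizing yields $P_c^*=e^{\kappa a_c}/\sum_{c'} e^{\kappa a_{c'}}$, which is precisely the softmax vector displayed in eq.~\eqref{eq:labelMEP}.

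Finally I would substitute $\textbf{P}^*$ back into $J$. Using $\log P_c^*=\kappa a_c-\log Z$ with $Z=\sum_{c'}e^{\kappa a_{c'}}$, the linear term and the $\kappa a_c$ part of the entropy term cancel, leaving $J(\textbf{P}^*)=\log Z=\log\big(1+\sum_{c\neq y}e^{\kappa(\cos\theta_c-\cos(\theta_y+\gamma w))}\big)$. Factoring $e^{\kappa\cos(\theta_y+\gamma w)}$ out of numerator and denominator rewrites this as $-\log\frac{e^{\kappa\cos(\theta_y+\gamma w)}}{e^{\kappa\cos(\theta_y+\gamma w)}+\sum_{c\neq y}e^{\kappa\cos\theta_c}}$, which is exactly $\ell_{AIDGN}(\textbf{z},y)$.

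I would expect no deep obstacle: the statement is the standard fact that the maximum of (linear reward $+$ Shannon entropy) over the simplex equals the log-sum-exp of the rewards, attained at the Gibbs distribution. The only places requiring care are the concavity/interiority argument that licenses dropping the inequality constraints, and the final rearrangement that converts the log-partition into the negative-log-softmax margin loss; both are routine once the $c=y$ term is handled correctly.
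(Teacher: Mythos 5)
Your proof is correct and follows essentially the same route as the paper's: a Lagrange-multiplier maximum-entropy argument on the simplex yielding the Gibbs/softmax maximizer, then back-substitution to identify the maximum with the log-partition $\log Z$, where your $a_c=\cos(\theta_c)-\cos(\theta_y+\gamma w)$, $a_y=0$ is exactly the paper's lemma instantiated with $a_y=b=\cos(\theta_y+\gamma w)$ and $a_c=\cos(\theta_c)$ for $c\neq y$. Two minor points in your favor: you explicitly justify dropping the nonnegativity constraints via strict concavity and the entropy gradient blowing up at the boundary, which the paper's KKT argument leaves implicit, and your final denominator $\sum_{c\neq y}e^{\kappa\cos(\theta_c)}$ is the correct one --- the $\cos(\theta_y)$ appearing inside the sum in the paper's displayed loss \eqref{eq:AIDGNloss} is a typo, as confirmed by the paper's own lemma evaluation $-\log\bigl(e^{\kappa b}/\sum_{c=1}^C e^{\kappa a_c}\bigr)$.
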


In theorem~\ref{theorem:Loss}  eq.~\eqref{eq:AIDGNloss}, we derived the loss $\ell_{AIDGN}(\textbf{z},y)$. We next inspect the gradient behavior of the loss. We take the gradients of $\ell_{AIDGN}(\textbf{z},y)$ w.r.t. the correct center $\textbf{w}_y$ and w.r.t. an incorrect center $\textbf{w}_c$. At the $k$-th time step of training, let $\textbf{p}_k$ be the prediction vector of the model, 
\begin{equation}\label{eq:gradients}
\begin{cases}
    \partial \ell_{AIDGN}(\textbf{z},y)/\partial \textbf{w}_y \propto \kappa \left(1-p_k(y) \right) \textbf{z},\\
    \partial \ell_{AIDGN}(\textbf{z},y)/\partial \textbf{w}_c \propto \kappa p_k(c)\textbf{z}.
\end{cases}
\end{equation}

When performing the gradient descent training, the eq.\eqref{eq:gradients} reveals the ideal behavior of decreasing gap between correct prediction probability and 1, and a non-ideal behavior in our DG formulation, i.e., shrinking the latent representation $\textbf{z}$. Specifically, when we assume the angular invariance, the norm of $\textbf{z}$ carries helpful information for comparing $p^t(\textbf{z}|Y)$ and $p(\textbf{z}|D,Y)$ as discussed in theorem~\ref{theorem:Reduce}, and the shrinking of $\textbf{z}$ causes information loss in the norm coordinate.

To combat against the shrinking tendency brought by the $\ell_{AIDGN}(\textbf{z},y)$ loss, we introduce an information regularizer to penalize the loss of information. Concretely, we introduce an ideal norm distribution (which can be regarded as the norm distribution of an ideal source domain) that follows an exponential distribution $Exp(1/\mu^*)$. The information loss regularizer $KL(\cdot \| \cdot)$ is the Kullback-Leibler (K-L) divergence 
that calculates the relative entropy between $Exp(1/\mu^*)$ and $Exp(1/\mu)$.
\begin{equation}
  KL(1/\mu^* \|1/\mu)=log(\mu/\mu^*)+\mu^*/\mu-1
\end{equation}

Finally, the final optimization objective of the proposed AIDGN is 
\begin{equation}\label{eq:finalOO}
\begin{aligned}
        &L_{AIDGN}= \sum_{d=1}^N  \sum_{i=1}^{n_d} \eta KL(1/\mu^* \|1/\mu_{d,y,\boldsymbol{\phi}})\\
        &-log \frac{e^{\kappa cos(\theta_y+\gamma w(r|\boldsymbol{\phi},d,y))}}{e^{\kappa cos(\theta_y+\gamma w(r|\boldsymbol{\phi},d,y))}+\sum_{c\neq y}e^{\kappa cos(\theta_y)}},
\end{aligned}
\end{equation}
where $\eta>0$ is a hyperparameter controlling the trade-off between information memorizing and forgetting.

\begin{table*}[th]
\centering
\scalebox{0.91}{
\begin{tabular}{lccccc}
\toprule
 Method & PACS & VLCS & OfficeHome & TerraIncognita & Avg \\
\midrule
ERM~\cite{vapnik1999erm} & 85.5 $\pm$ 0.2 & 77.5 $\pm$ 0.4 & 66.5 $\pm$ 0.3 & 46.1 $\pm$ 1.8 & 68.9 \\
ERM$^\dagger$~\cite{vapnik1999erm} & 85.8 $\pm$ 0.2 & 77.7 $\pm$ 0.4 & 66.9 $\pm$ 0.2 & 45.8 $\pm$ 1.5 & 69.1 \\
IRM~\cite{arjovsky2019irm} & 83.5 $\pm$ 0.8 & 78.5 $\pm$ 0.5 & 64.3 $\pm$ 2.2 & 47.6 $\pm$ 0.8 & 68.5 \\
DRO~\cite{sagawa2019dro} & 84.4 $\pm$ 0.8 & 76.7 $\pm$ 0.6 & 66.0 $\pm$ 0.7 & 43.2 $\pm$ 1.1 & 67.6 \\
Mixup~\cite{wang2020mixup} & 84.6 $\pm$ 0.6 & 77.4 $\pm$ 0.6 & 68.1 $\pm$ 0.3 & 47.9 $\pm$ 0.8 & 69.5 \\
MLDG~\cite{li2018mldg} & 84.9 $\pm$ 1.0 & 77.2 $\pm$ 0.4 & 66.8 $\pm$ 0.6 & 47.7 $\pm$ 0.9 & 69.2 \\
CORAL~\cite{sun2016coral} & 86.2 $\pm$ 0.3 & 78.8 $\pm$ 0.6 & 68.7 $\pm$ 0.3 & 47.6 $\pm$ 1.0 & 70.3 \\
MMD~\cite{li2018mmd} & 84.6 $\pm$ 0.5 & 77.5 $\pm$ 0.9 & 66.3 $\pm$ 0.1 & 42.2 $\pm$ 1.6 & 67.7 \\
DANN~\cite{ganin2016dann} & 83.6 $\pm$ 0.4 & 78.6 $\pm$ 0.4 & 65.9 $\pm$ 0.6 & 46.7 $\pm$ 0.5 & 68.7 \\
CDANN~\cite{li2018cdann} & 82.6 $\pm$ 0.9 & 77.5 $\pm$ 0.1 & 65.8 $\pm$ 1.3 & 45.8 $\pm$ 1.6 & 67.9 \\
MTL~\cite{blanchard2021MTL} & 84.6 $\pm$ 0.5 & 77.2 $\pm$ 0.4 & 66.4 $\pm$ 0.5 & 45.6 $\pm$ 1.2 & 68.5 \\
SagNet~\cite{nam2021sagnet} & 86.3 $\pm$ 0.2 & 77.8 $\pm$ 0.5 & 68.1 $\pm$ 0.1 & 48.6 $\pm$ 1.0 & 70.2 \\
ARM~\cite{zhang2020arm} & 85.1 $\pm$ 0.4 & 77.6 $\pm$ 0.3 & 64.8 $\pm$ 0.3 & 45.5 $\pm$ 0.3 & 68.3 \\
VREx~\cite{krueger2021vrex} & 84.9 $\pm$ 0.6 & 78.3 $\pm$ 0.2 & 66.4 $\pm$ 0.6 & 46.4 $\pm$ 0.6 & 69.0 \\
RSC~\cite{huang2020rsc} & 85.2 $\pm$ 0.9 & 77.1 $\pm$ 0.5 & 65.5 $\pm$ 0.9 & 46.6 $\pm$ 1.0 & 68.6 \\
\hline
AIDGN (ours) & {\bf86.6 $\pm$ 0.3} & {\bf78.9 $\pm$ 0.3} & {\bf68.8 $\pm$ 0.2} & {\bf49.4 $\pm$ 0.6} & {\bf70.9} \\

\bottomrule
\end{tabular}}
\caption{\textbf{Benchmark Comparisons.} Out-of-domain classification accuracies(\%) on PACS, VLCS, OfficeHome and TerraIncognita are shown. Note that the results of ERM$^\dagger$ are reproduced by us, and other numbers are from DomainBed.}
\label{benchmark comparisons}
\end{table*}

\begin{remark}
The proposed AIDGN optimization objective eq.~\eqref{eq:finalOO} theoretically justifies and generalizes the recent proposed MAG loss for face recognition~\cite{meng2021magface}. When there is only one source domain, it can be easily verified that with a first-order Maclaurin's expansion approximation w.r.t. the log term in the regularizer $KL(1/\mu^* \|1/\mu)$, the proposed AIDGN loss eq.~\eqref{eq:finalOO} degenerates to the MAG loss.
\end{remark}

\section{Experiments}

\subsection{Experimental Settings}

\paragraph{Datasets.} 
We conduct our experiments\footnote{Codes are avalable at \href{https://github.com/JinYujie99/aidgn}{https://github.com/JinYujie99/aidgn}} on four public benchmark datasets to evaluate the effectiveness of the proposed AIDGN. PACS~\cite{li2017pacs} comprises four domains $d\in \lbrace$photo, art, cartoon, sketch$\rbrace$, containing 9991 images of 7 categories. VLCS~\cite{fang2013vlcs} comprises four photographic domains $d\in \lbrace$VOC2007, LabelMe, Caltech101, SUN09$\rbrace$, with 10729 samples of 5 classes. OfficeHome~\cite{venkateswara2017office} has four domains $d\in \lbrace$art, clipart, product, real$\rbrace$, containing 15500 images with a larger label sets of 65 categories. TerraIncognita~\cite{beery2018terra} comprises photos of wild animals taken by cameras at different locations. Following~\cite{gulrajani2020domainbed}, we use domains of  $d\in \lbrace$L100, L38, L43, L46$\rbrace$, which include 24778 samples and 10 classes.

\paragraph{Evaluation Protocol.} For a fair comparison, we use the  DomainBed training-domain validation set protocol~\cite{gulrajani2020domainbed} for model selection. For training, we randomly split each training domain into 8:2 training/validation splits, 
choose the model 
on the overall validation set, and then evaluate its performance on the target domain set. We report the mean and standard deviation of out-of-domain classification accuracy from three different runs with different training-validation splits.

\paragraph{Implementation Details.} For all datasets, we use ResNet-50~\cite{he2016resnet} pre-trained on ImageNet~\cite{deng2009imagenet} as the feature extractor $\pi$ and one fully connected layer as the classifier $f$.  We construct a mini-batch containing all source domains where each domain has 32 images. We freeze all the batch normalization (BN) layers from pre-trained ResNet since different domains in a mini-batch follow different distributions.  The network is trained for 5000 iterations using the Adam~\cite{kingma2015adam} optimizer. To estimate $\delta_{y,\boldsymbol{\phi}}$ and $\mu_{d,y,\boldsymbol{\phi}}$ in AIDGN loss~\eqref{eq:finalOO}, we treat all $\delta_{y,\boldsymbol{\phi}}$ as one single hyperparameter $\delta$, and perform in-batch estimation for the $d$-th domain norm scale parameter, i.e., $\mu_d$, while ignoring the index $y$ and $\boldsymbol{\phi}$. Specifically, we estimate $\mu_d$ by computing the average norm of samples of the $d$-th domain in a minibatch. We do this mainly for two reasons. First, according to the norm shift assumption, domain index $d$ is more relevant to the norm distribution. Secondly, since the feasible label sets and angular sets can be extensive, it is almost impossible to estimate $\mu_{d,y,\boldsymbol{\phi}}$ in a minibatch precisely. Moreover, for the log term in the KL regularizer, we approximate it with a first-order Maclaurin's expansion to make the overall objective function convex.  Following~\cite{gulrajani2020domainbed}, we conduct a random search over the joint distribution of hyperparameters. More implementation details about data preprocessing techniques, model architectures, hyperparameters, and experimental environments can be found in the supplementary material.

\begin{table*}[ht]
\centering
\scalebox{0.92}{
\begin{tabular}{lccccc}
\toprule
  & C & L & V & S & Avg \\
\midrule
AIDGN w/o RD & 96.9 $\pm$ 0.3 & 63.5 $\pm$ 0.5 & 73.0 $\pm$ 0.5 & 75.8 $\pm$ 0.6 & 77.3 \\
AIDGN w/o R & 97.3 $\pm$ 0.5 & 64.9 $\pm$ 0.8 & {\bf73.4 $\pm$ 0.4} & 77.9 $\pm$ 1.0 & 78.4  \\
AIDGN w/o D & 97.6 $\pm$ 0.3 & 64.7 $\pm$ 0.4 & {\bf73.4 $\pm$ 0.4} & 78.1 $\pm$ 0.7 & 78.4  \\
AIDGN   & {\bf98.3 $\pm$ 0.2} & {\bf65.7 $\pm$ 0.4} & 73.1 $\pm$ 0.4 & {\bf78.7 $\pm$ 0.7} & {\bf78.9}  \\

\bottomrule
\end{tabular}}
\caption{\textbf{Ablation studies.} Out-of-domain classification accuracies(\%) with different optimization components on VLCS.}
\label{ablation studies}
\end{table*}

\subsection{Benchmark Comparisons}

We compare our proposed AIDGN with 14 available DG methods in  DomainBed~\cite{gulrajani2020domainbed}. Results on PACS, VLCS, OfficeHome, and TerraIncognita are reported in Table~\ref{benchmark comparisons}. (\textit{Details of each baseline and full results per dataset per domain can be found in the supplementary material.}) All the values of baselines are taken from DomainBed when ResNet-50 is used as a backbone network, except that ERM$^\dagger$ is reproduced by us. For all datasets, AIDGN achieves better average out-of-domain accuracy. Particularly, in TerraIncognita, the proposed method achieves 49.4\%, which is significantly better than the most competitive baseline, 48.6\%. Similarly, it is 86.6\% for PACS, 78.9\% for VLCS, and 68.8\% for OfficeHome, all of which outperform the previous best domain-invariant representation learning results. 

\subsection{Ablation Studies}

To verify the effectiveness of all components of the AIDGN optimization objective, we do ablation studies on the VLCS dataset. Specifically, we compare AIDGN with three variants: a) AIDGN w/o R: The model is trained without the KL regularizer.  b) AIDGN w/o D: The model is trained without estimation for the norm scale parameters of source domains, i.e., ignoring the $\mu_d$ term in the expression of $w(r|\boldsymbol{\phi},d,y)$. c) AIDGN w/o RD: The model is trained without both. The results are reported in Table~\ref{ablation studies}. Note that the results for all the three variants also follow the training-domain validation set model selection and hyperparameter random search protocol, and are obtained from three different runs with different dataset splits. The results show that the full model AIDGN outperforms all these variants, indicating that both the KL regularizer and the estimation for the source domain norm scale parameters are essential to our algorithm.

\subsection{Discussions on the AIDGN Latent Space}

\begin{figure}[t]
    \centering
    \subfigure[Ours(domain)]{
    \begin{minipage}{4.05cm}
    \centering
        \label{ours-domain}
        \includegraphics[width=4.05cm]{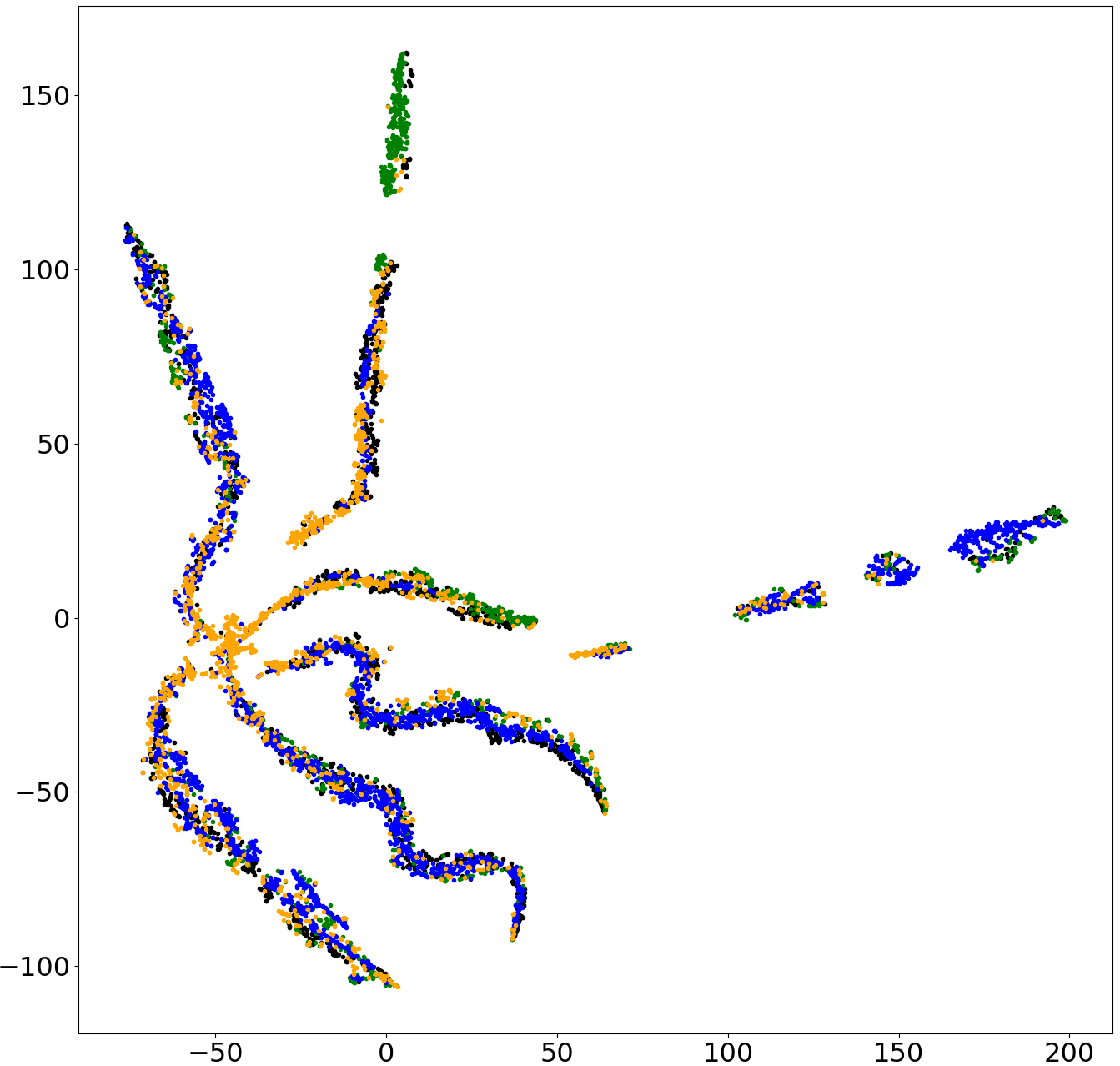}
    \end{minipage}
    }
    \subfigure[Ours(class)]{
    \begin{minipage}{4.05cm}
    \centering
        \label{ours-class}
        \includegraphics[width=4.05cm]{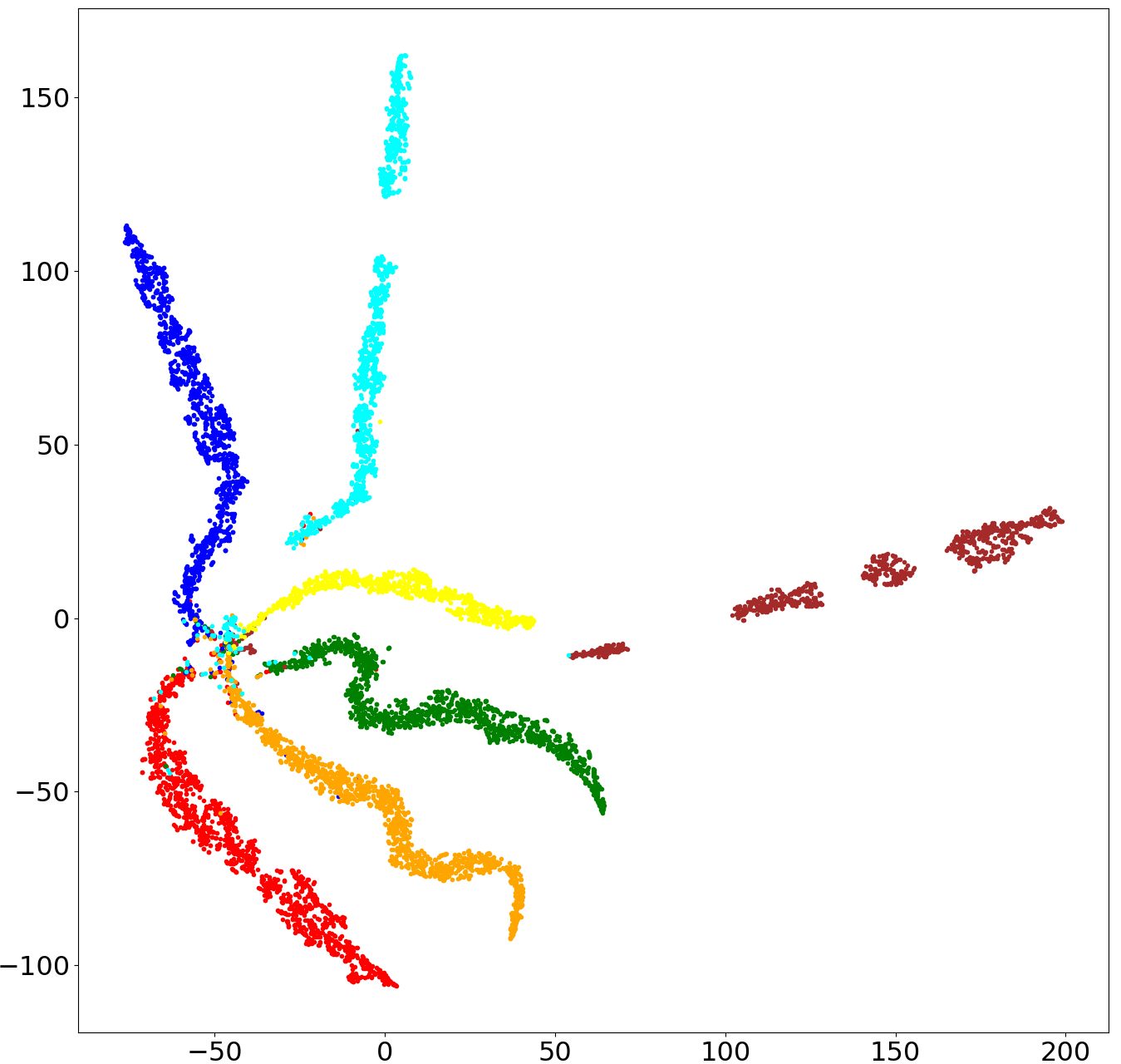}
    \end{minipage}
    }
\caption{Feature visualization for AIDGN on PACS: (a) different colors indicate  different domains, source domains include cartoon (black), photo (green) and sketch (blue) while the target domain is art-painting (orange); (b) different colors represent different classes. \textit{Best viewed in color(Zoom in for details).}}
\label{aidgn-visual}
\end{figure}

To analyze the learned latent space of AIDGN, we visualize the distributions of features with t-SNE in Fig~\ref{aidgn-visual}. It is shown that AIDGN can effectively make the decision boundaries of all categories separate from each other and better balance the intra-class compactness and the inter-class separation.

In addition, we test our model with existing test-time adaptation methods which are based on entropy minimization: T3A~\cite{iwasawa2021T3A} and Tent~\cite{wang2020Tent}. Since DomainBed~\cite{gulrajani2020domainbed} freezes the BN layers
, we test two slightly modified versions of Tent following~\cite{iwasawa2021T3A}. Specifically, Tent-BN adds one BN layer before the classifier and modulates its transformation and normalization parameters. Tent-C adapts the classifier to minimize prediction entropy. The average performances across PACS, VLCS, OfficeHome and TerraIncognita are shown in Fig~\ref{test-time adaptation}. (Please refer to the full per dataset and domain results in the supplementary material.) We can find that none of them can further improve the performance of AIDGN and our original AIDGN outperforms all the variants. While test-time adaptation modifies the model aiming at reducing prediction uncertainty caused mainly by domain shift, our proposed AIDGN models domain shift as norm parameters shift in the latent space, and is thus robust to the uncertainty brought by domain shift.

\begin{figure}
    \centering
    \includegraphics[width=8cm]{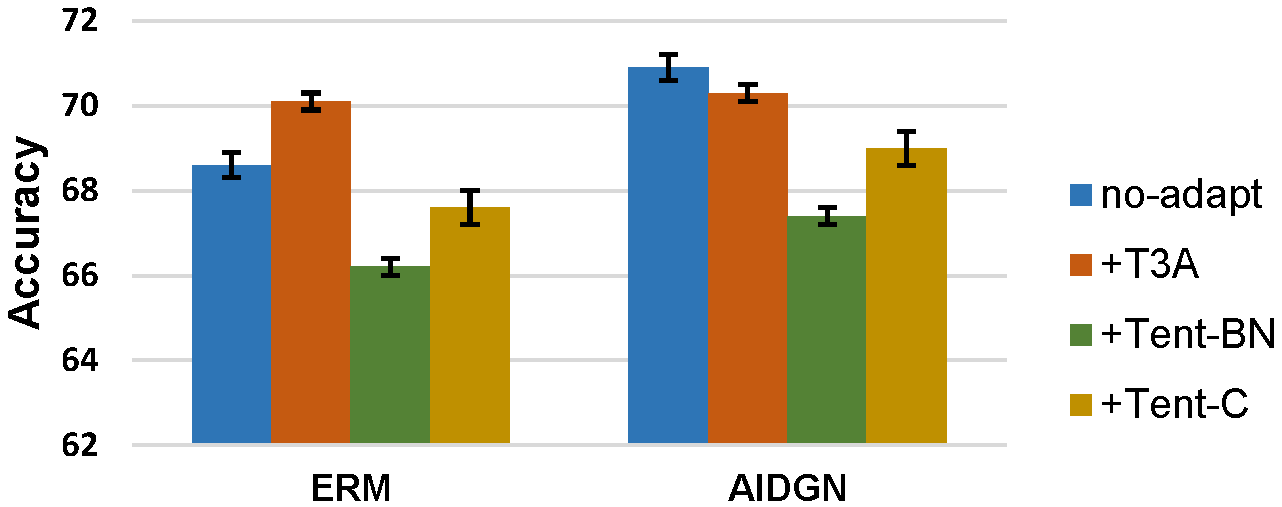}
    \caption{Average out-of-domain accuracies of ERM and AIDGN equipped with different test-time adaptation methods.}
    \label{test-time adaptation}
\end{figure}

\section{Conclusion}

In this paper, we introduce a novel angular invariance and norm shift assumption into domain generalization, inspired by the acknowledged fact that internal layers in convolutional neural networks capture high-level semantic concepts. We then propose a method based on the assumption and develop a practical optimization objective within a von-Mises Fisher mixture model. Extensive experiments on four benchmarks demonstrate the superior performance of our proposed method. While this work focuses on inter-domain invariance, it is complementary to ensemble learning which aims to learn robust classifiers. For future work, we will develop methods that take both invariance and robustness into consideration.

\section*{Acknowledgements}
This work is supported by the National Natural Science Foundation of China (No.62172011).
\small
\bibliographystyle{named}
\bibliography{ijcai22}
\normalsize
\onecolumn
\setcounter{equation}{0}
\setcounter{section}{0}
\setcounter{theorem}{0}
\setcounter{table}{0}
\setcounter{figure}{0}
\renewcommand\thesection{\Alph{section}}
\section{Proofs for AIDGN}
\begin{theorem}\label{theorem:reducingComplexity}
Suppose $support(p^t(\textbf{z}))\subset support(p(\textbf{z}|D))$. If the angular invariance assumption~\ref{assump:angularInvariance} holds, then for $d=1,\ldots,N$, $p^t(\textbf{z}|Y)/p(\textbf{z}|D=d,Y)$ exists and satisfies
\begin{equation}\label{eq:Reducing}
 \frac{p^t(\textbf{z}|Y)}{p(\textbf{z}|d,Y)}=\frac{ p^t(r|\boldsymbol{\phi},Y)}{p(r|\boldsymbol{\phi},d,Y)}\triangleq w(r|\boldsymbol{\phi},d,y).
\end{equation}
\end{theorem}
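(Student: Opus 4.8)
The plan is to exploit two facts in sequence: that the polar map $g$ is a bijection whose Jacobian is independent of the underlying probability law, and that the chain-rule factorization of any polar density into an angular factor times a conditional norm factor lets me cancel the angular parts using Assumption~\ref{assump:angularInvariance}. Concretely, I would establish the chain of equalities
\[
\frac{p^t(\textbf{z}|Y)}{p(\textbf{z}|d,Y)} = \frac{p^t(r,\boldsymbol{\phi}|Y)}{p(r,\boldsymbol{\phi}|d,Y)} = \frac{p^t(r|\boldsymbol{\phi},Y)}{p(r|\boldsymbol{\phi},d,Y)},
\]
where the first equality removes the coordinate-change Jacobian and the second removes the angular factor.

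First I would settle existence. The hypothesis $support(p^t(\textbf{z}))\subset support(p(\textbf{z}|D))$ guarantees that wherever the numerator $p^t(\textbf{z}|Y)$ is positive the denominator $p(\textbf{z}|d,Y)$ does not vanish, so the ratio is well-defined almost everywhere with respect to the target conditional measure. (Conditioning on $Y=y$ only restricts to a subset of the support, so I would remark briefly that the stated containment is enough to keep the denominator bounded away from zero on the relevant set.)

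Next I would change variables through $g$. Since $g$ is bijective, the change-of-variables formula gives $p^t(r,\boldsymbol{\phi}|Y) = p^t(\textbf{z}|Y)\,\lvert\det Dg^{-1}(r,\boldsymbol{\phi})\rvert$ and, identically, $p(r,\boldsymbol{\phi}|d,Y) = p(\textbf{z}|d,Y)\,\lvert\det Dg^{-1}(r,\boldsymbol{\phi})\rvert$, with the \emph{same} Jacobian factor appearing in both, because it is a property of the fixed coordinate map alone and not of the distribution. Forming the ratio cancels this factor and yields the first equality above. I would then apply the factorization $p(r,\boldsymbol{\phi}|Y,D)=p(\boldsymbol{\phi}|Y,D)\,p(r|\boldsymbol{\phi},Y,D)$ (already recorded after Assumption~\ref{assump:angularInvariance}) to numerator and denominator, and invoke the angular invariance equation $p(\boldsymbol{\phi}|Y,D=d)=p^t(\boldsymbol{\phi}|Y)$ to cancel the angular parts, leaving exactly $p^t(r|\boldsymbol{\phi},Y)/p(r|\boldsymbol{\phi},d,Y)$, which is the definition of $w(r|\boldsymbol{\phi},d,y)$.

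The step I expect to require the most care is the Jacobian cancellation: I must argue explicitly that $\lvert\det Dg^{-1}\rvert$ is identical for the target and the $d$-th source densities — immediate once one notes that $g$ is a deterministic reparameterization independent of the probability law — and that it is finite and nonzero on the support, so that dividing it out is legitimate. Everything after that is routine bookkeeping with conditional densities.
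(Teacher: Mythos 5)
Your proposal is correct and follows essentially the same route as the paper's own proof: change of variables through the bijective polar map $g$, cancellation of the law-independent Jacobian factor $\lvert\det \textbf{J}_{g^{-1}}\rvert$ (which the paper writes out explicitly as a function $\mathcal{F}(r,\phi_1,\ldots,\phi_{n-1})$ of the polar coordinates alone), factorization $p(r,\boldsymbol{\phi}|Y,D)=p(\boldsymbol{\phi}|Y,D)\,p(r|\boldsymbol{\phi},Y,D)$, and cancellation of the angular factors via Assumption~\ref{assump:angularInvariance}. Your explicit remark on existence via the support-containment hypothesis is a point the paper's proof leaves implicit, but the argument is otherwise the same.
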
 

\begin{proof} 
Suppose that $(r,\phi_1,\ldots,\phi_{n-1})=g(\textbf{z})$ is the polar transformation of the cartesian coordinates $\textbf{z}=(z_1,\ldots,z_n)$ in $\mathbb{R}$ space. The polar transformation $g(\cdot)$ is a bijective map with
\begin{equation}
\begin{aligned}
     \begin{cases}
    &r=g_1(z_1,\ldots,z_n)=\sqrt{z_1^2+z_2^2+\ldots+z_n^n},\\
    &\phi_1=g_2(z_1,\ldots,z_n)=arccot (z_1/\sqrt{z_1^2+z_2^2+\ldots+z_n^n}),\\
    &\phi_2=g_3(z_1,\ldots,z_n)=arccot (z_2/\sqrt{z_1^2+z_2^2+\ldots+z_n^n}),\\
    &\ \ \ \ \ \ \cdots,\\
    &\phi_{n-2}=g_{n-1}(z_1,\ldots,z_n)= arccot (z_{n-2}/\sqrt{z_1^2+z_2^2+\ldots+z_n^n}),\\
    &\phi_{n-1}=g_n(z_1,\ldots,z_n)=2 arccot (z_{n-1}+\sqrt{z_{n-1}^2+z_{n}^2}).
    \end{cases}   
\end{aligned}
\end{equation}
The inverse map $g^{-1}((r,\phi_1,\ldots,\phi_{n-1})=z_1,\ldots,z_n$ is
\begin{equation}
\begin{aligned}
     \begin{cases}
    &z_1=g^{-1}_1(r,\phi_1,\ldots,\phi_{n-1})=rcos(\phi_1),\\
    &z_2=g^{-1}_2(r,\phi_1,\ldots,\phi_{n-1})=rsin(\phi_1)cos(\phi_2),\\
    &z_3=g^{-1}_3(r,\phi_1,\ldots,\phi_{n-1})=rsin(\phi_1)sin(\phi_2)cos(\phi_3),\\
    &\ \ \ \ \ \ \cdots,\\
    &z_{n-1}=g^{-1}_{n-1}(r,\phi_1,\ldots,\phi_{n-1})=r(\prod_{i=1}^{n-2}sin(\phi_i))cos(\phi_{n-1}),\\
    &z_m=g^{-1}_n(r,\phi_1,\ldots,\phi_{n-1})=r(\prod_{i=1}^{n-1}sin(\phi_i)).
    \end{cases}   
\end{aligned}
\end{equation}
The Jacobian matrix of $g^{-1}$ is
\begin{equation}
    \textbf{J}_{g^{-1}}=\begin{pmatrix}
    cos(\phi_1) & -rsin(\phi_1) & 0 &\cdots & 0\\
    sin(\phi_1)cos(\phi_2) & rcos(\phi_1)cos(\phi_2) & -rsin(\phi_1)sin(\phi_2) &\cdots & 0\\
    \vdots &\vdots &\vdots &\ddots  &\vdots \\
    (\prod_{i=1}^{n-2}sin(\phi_i))cos(\phi_{n-1}) & \cdots &\cdots &\cdots & -r \prod_{i=1}^{n-1}sin(\phi_i)\\
    \prod_{i=1}^{n-1}sin(\phi_i) & r(\prod_{i=1}^{n-2}cos(\phi_i))sin(\phi_{n-1}) &\cdots &\cdots & r(\prod_{i=1}^{n-2}sin(\phi_i))cos(\phi_{n-1})
    \end{pmatrix}
\end{equation}
Thus the determinant of the Jacobian matrix $\textbf{J}_{g^{-1}}$, for any distribution in the space, is a function of polar coordinates $(r,\phi_1,\ldots,\phi_{n-1})$, i.e.,
 \begin{equation}\label{eq:independence}
  det(\textbf{J}_{g^{-1}})=\mathcal{F}(r,\phi_1,\ldots,\phi_{n-1}) . 
 \end{equation}
 \noindent From the p.d.f. transformation formula w.r.t. bijective maps, we have
\begin{equation}
\begin{aligned}
    \frac{p^t(z_1,\ldots,z_{n}|Y)}{p(z_1,\ldots,z_n|d,Y)}&=\frac{p^t(g(z_1,\ldots,z_n)|Y)det(\textbf{J}^t_{g})}{p(g(z_1,\ldots,z_n)|Y)det(\textbf{J}^d_{g})}=\frac{p^t(g(z_1,\ldots,z_n)|Y)1/det(\textbf{J}^t_{g^{-1}})}{p(g(z_1,\ldots,z_n)|Y)1/det(\textbf{J}^d_{g^{-1}})}\\
    &\overset{(a)}{=}\frac{p^t(g(z_1,\ldots,z_n)|Y)\mathcal{F}(r,\phi_1,\ldots,\phi_{n-1}))}{p(g(z_1,\ldots,z_n)|Y)\mathcal{F}(r,\phi_1,\ldots,\phi_{n-1}))}=\frac{p^t(r,\phi_1,\ldots,\phi_{n-1}|Y)}{p(r,\phi_1,\ldots,\phi_{n-1}|Y,d)}\\
    &=\frac{p^t(\boldsymbol{\phi}|Y)p^t(r|\boldsymbol{\phi},Y)}{p(\boldsymbol{\phi}|Y,d)p(r|\boldsymbol{\phi},d,Y)}\\
    &\overset{(b)}{=}\frac{p^t(r|\boldsymbol{\phi},Y)}{p(r|\boldsymbol{\phi},d,Y)},
\end{aligned}
\end{equation}
where (a) is invoking eq.~\eqref{eq:independence} and (b) is invoking the angular invariance assumption.

\end{proof}
\clearpage

\begin{theorem}
When the assumptions~\ref{assump:angularInvariance} and~\ref{assump:vMFMixture} is satisfied, then 
\begin{equation}
\begin{aligned}
        p^t(y|\textbf{z})&=\frac{p(\textbf{z}^*|Y=y)w(r|\boldsymbol{\phi},d,y)P(Y=y)}{\sum_{c=1}^C p(\textbf{z}^*|Y=c)w(r|\boldsymbol{\phi},d,c)P(Y=c) }
        =\frac{exp(\kappa \textbf{w}_y^{\top}\textbf{z}^*)w(r|\boldsymbol{\phi},d,y)P(Y=y)}{\sum_{c=1}^C exp(\kappa \textbf{w}_c^{\top}\textbf{z}^*)w(r|\boldsymbol{\phi},d,c)P(Y=c) }.
\end{aligned}
\end{equation}
\end{theorem}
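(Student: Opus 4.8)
The plan is to derive the first equality by Bayes' rule applied to the target law, using Theorem~\ref{theorem:Reduce} to trade the inaccessible target class-conditional for a source one, and then to feed the von-Mises Fisher mixture assumption into a reversed application of Bayes' rule so as to identify the source class-conditional (up to label-free factors) with the directional density $p(\textbf{z}^*|y)$. The second equality then follows by substituting the explicit vMF form and cancelling its class-independent normalizing constant.

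First I would write the target posterior by Bayes' rule, $p^t(y|\textbf{z})\propto p^t(\textbf{z}|y)P(Y=y)$, where throughout $\propto$ suppresses factors that do not depend on the class label $y$; these are exactly the quantities that cancel between the numerator and each summand of the normalizing denominator. Applying Theorem~\ref{theorem:Reduce} replaces the target conditional by a source conditional, $p^t(\textbf{z}|y)=w(r|\boldsymbol{\phi},d,y)\,p(\textbf{z}|d,y)$ for a fixed source index $d$, so that $p^t(y|\textbf{z})\propto w(r|\boldsymbol{\phi},d,y)\,p(\textbf{z}|d,y)\,P(Y=y)$. It then remains only to re-express the source conditional $p(\textbf{z}|d,y)$.

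The key step is a second, reversed Bayes inversion on the source domain, $p(\textbf{z}|d,y)=p(y|\textbf{z},d)\,p(\textbf{z}|d)/p(y|d)$, into which I would feed Assumption~\ref{assump:vMFMixture}. That assumption supplies $p(y|\textbf{z},d)=p(y|\textbf{z}^*)$, and Bayes' rule inside the mixture gives $p(y|\textbf{z}^*)\propto p(\textbf{z}^*|y)P(Y=y)$; since $p(\textbf{z}|d)$ is label-free, this yields $p(\textbf{z}|d,y)\propto p(\textbf{z}^*|y)P(Y=y)/p(y|d)$. Collecting the pieces and assuming the label prior is shared across the domains, so that $p(y|d)$ equals the common prior $P(Y=y)$ and cancels, I obtain $p^t(y|\textbf{z})\propto p(\textbf{z}^*|y)\,w(r|\boldsymbol{\phi},d,y)\,P(Y=y)$; restoring the normalizer by summing over $c=1,\ldots,C$ gives the first displayed equality. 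The second equality is then immediate: substitute the vMF density from eq.~\eqref{eq:vMFdistribution}, namely $p(\textbf{z}^*|y)=\frac{\kappa^{n/2-1}}{(2\pi)^{n/2}I_{n/2-1}(\kappa)}\exp(\kappa\textbf{w}_y^{\top}\textbf{z}^*)$, whose prefactor is independent of $y$ and hence cancels, leaving $\exp(\kappa\textbf{w}_y^{\top}\textbf{z}^*)$ in its place.

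The step I expect to be the main obstacle is the bookkeeping of exactly which factors are label-independent. Naively decomposing the source conditional in polar coordinates into an angular density times a radial conditional $p(r|\boldsymbol{\phi},d,y)$ appears to leave a stray class-dependent radial factor that would spoil the cancellation; the resolution is that Assumption~\ref{assump:vMFMixture}, by forcing the label to depend on $\textbf{z}$ only through its direction $\textbf{z}^*$, is precisely what makes the radial conditional class-free given $\boldsymbol{\phi}$. Making this consistency explicit, together with the label-prior assumption that lets $p(y|d)$ drop out, is the delicate part of the argument, whereas the two Bayes inversions, the invocation of Theorem~\ref{theorem:Reduce}, and the final vMF substitution are routine.
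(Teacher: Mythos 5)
Your proposal is correct and takes essentially the same route as the paper's own proof: Bayes' rule on the target posterior, Theorem~\ref{theorem:Reduce} to trade the target class-conditional $p^t(\textbf{z}|y)$ for the source one, the posterior-invariance clause $p(y|\textbf{z},D)=p(y|\textbf{z}^*)$ of Assumption~\ref{assump:vMFMixture} to pass to the normalized variable, and cancellation of the class-independent vMF normalizing constant. Your explicit requirement that the label prior be shared across domains, $p(y|d)=P(Y=y)$, is precisely the step the paper uses implicitly (and somewhat loosely) when it rewrites $p(Y=y)/p(D=d)$ as $p(y|d)$, so making it explicit is a point in your write-up's favor rather than a divergence.
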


\begin{proof}

From assumption 3, we assume that the distribution of the vMF components $p(\textbf{z}^*|y)=\mathcal{V}(\textbf{z}^*;\textbf{w}_y,\kappa)$ (composed of model parameters to be learned) satisfies $p(y|\textbf{z},D)=p(y|\textbf{z}^*)$, which is equivalent to
\begin{equation}\label{eq:posteriorinvariance}
    p(\textbf{z},y|D)/p(\textbf{z})=p(\textbf{z}^*,y)/p(\textbf{z}^*).
\end{equation}
Then we may complete the proof by invoking theorem~\ref{theorem:reducingComplexity} and some straightforward derivations,
\begin{equation*}
\begin{aligned}
          p^t(y|\textbf{z})&=\frac{p(Y=y)p^t(\textbf{z}|y)}{\sum_{c=1}^N p(Y=c)p^t(\textbf{z}|c)}\\
      &\overset{(a)}{=} \frac{p(Y=y)p(\textbf{z}|y,d)w(r|\boldsymbol{\phi},y,d)}{\sum_{c=1}^N p(Y=c)p(\textbf{z}|c,d)w(r|\boldsymbol{\phi},c,d)}\\
      &=\frac{[p(Y=y)/p(D=d)]p(\textbf{z}|y,d)w(r|\boldsymbol{\phi},y,d)}{\sum_{c=1}^N [p(Y=c)/p(D=d)]p(\textbf{z}|c,d)w(r|\boldsymbol{\phi},c,d)}\\
      &= \frac{p(y|d)p(\textbf{z}|y,d)w(r|\boldsymbol{\phi},y,d)}{\sum_{c=1}^N p(c|d)p(\textbf{z}|c,d)w(r|\boldsymbol{\phi},c,d)}\\
      &=\frac{p(\textbf{z},y|d)w(r|\boldsymbol{\phi},y,d)}{\sum_{c=1}^N p(\textbf{z},c|d)w(r|\boldsymbol{\phi},c,d)}\\
      &=\frac{[p(\textbf{z},y|d)/p(\textbf{z})]w(r|\boldsymbol{\phi},y,d)}{\sum_{c=1}^N [p(\textbf{z},c|d)/p(\textbf{z})]w(r|\boldsymbol{\phi},c,d)}\\
      &\overset{(b)}{=}\frac{[p(\textbf{z}^*,y)/p(\textbf{z}^*)]w(r|\boldsymbol{\phi},y,d)}{\sum_{c=1}^N [p(\textbf{z}^*,c)/p(\textbf{z}^*)]w(r|\boldsymbol{\phi},c,d)}\\
      &=\frac{p(\textbf{z}^*,y)w(r|\boldsymbol{\phi},y,d)}{\sum_{c=1}^N [p(\textbf{z}^*,c)w(r|\boldsymbol{\phi},c,d)}\\
      &=\frac{p(Y=y)p(\textbf{z}^*|y)w(r|\boldsymbol{\phi},y,d)}{\sum_{c=1}^N [p(Y=c)p(\textbf{z}^*|c)w(r|\boldsymbol{\phi},c,d)}\\
      &\overset{(c)}{=}\frac{exp(\kappa \textbf{w}_y^{\top}\textbf{z}^*)w(r|\boldsymbol{\phi},d,y)P(Y=y)}{\sum_{c=1}^C exp(\kappa \textbf{w}_c^{\top}\textbf{z}^*)w(r|\boldsymbol{\phi},d,c)P(Y=c) },
\end{aligned}
\end{equation*}
where we use theorem~\ref{theorem:reducingComplexity} at (a), eq.~\eqref{eq:posteriorinvariance} at (b), and the kernel of vMF components at (c).
\end{proof}
\clearpage

\begin{theorem}
Let $\Delta$ be a $C-1$ simplex and let $\textbf{P}=(P_1,\ldots,P_C) \in \Delta$ denote a distribution of classes, then
\begin{equation}\label{eq:labelMEP}
\begin{aligned}
    \textbf{P}^*&=(\frac{e^{\kappa(cos(\theta_1)-cos(\theta_y+\gamma w(r|\boldsymbol{\phi},d,y)))}}{\sum_{c=1}^C e^{\kappa(cos(\theta_c)-cos(\theta_y+\gamma w(r|\boldsymbol{\phi},d,y)))}},\ldots,\frac{e^{\kappa(cos(\theta_C)-cos(\theta_y+\gamma w(r|\boldsymbol{\phi},d,y)))}}{\sum_{c=1}^C e^{\kappa(cos(\theta_c)-cos(\theta_y+\gamma w(r|\boldsymbol{\phi},d,y)))}})\\
    =&\arg \max_{\textbf{P}\in \Delta} \kappa [ \sum_{c\neq y}^C P_c( cos(\theta_c)-cos(\theta_y+\gamma w(r|\boldsymbol{\phi},d,y)))\\
    &+P_y(cos(\theta_y+\gamma w(r|\boldsymbol{\phi},d,y)
     -cos(\theta_y+\gamma w(r|\boldsymbol{\phi},d,y)))]+\sum_{c=1}^C P_c(-log P_c).
\end{aligned}
\end{equation}
At $\textbf{P}=\textbf{P}^*$, the maximum of the right hand side (r.h.s.) is
\begin{equation}\label{eq:AIDGNloss}
\begin{aligned}
    &\ell_{AIDGN}(\textbf{z},y)\\
    &=-log \frac{e^{\kappa cos(\theta_y+\gamma w(r|\boldsymbol{\phi},d,y))}}{e^{\kappa cos(\theta_y+\gamma w(r|\boldsymbol{\phi},d,y))}+\sum_{c\neq y}e^{\kappa cos(\theta_y)}}.
\end{aligned}
\end{equation}
\end{theorem}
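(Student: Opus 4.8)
The plan is to recognize the r.h.s. of eq.~\eqref{eq:labelMEP} as a textbook maximum-entropy (Gibbs variational) problem and solve it by Lagrange multipliers. First I would simplify the functional. The summand attached to $P_y$ carries the factor $\cos(\theta_y+\gamma w(r|\boldsymbol{\phi},d,y))-\cos(\theta_y+\gamma w(r|\boldsymbol{\phi},d,y))=0$, so it contributes nothing. Setting $s_c\triangleq\kappa(\cos(\theta_c)-\cos(\theta_y+\gamma w(r|\boldsymbol{\phi},d,y)))$ for $c\neq y$ and $s_y\triangleq 0$, the objective collapses to the compact form $\Psi(\textbf{P})=\sum_{c=1}^C P_c s_c-\sum_{c=1}^C P_c\log P_c$, namely a linear score term plus the Shannon entropy of $\textbf{P}$.

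Next I would form the Lagrangian $\mathcal{L}(\textbf{P},\lambda)=\Psi(\textbf{P})+\lambda(1-\sum_c P_c)$, enforcing only the normalization constraint $\sum_c P_c=1$ and leaving the inequalities $P_c\ge 0$ to be checked afterward. The stationarity condition $\partial\mathcal{L}/\partial P_c=s_c-\log P_c-1-\lambda=0$ gives $P_c=\exp(s_c-1-\lambda)$, and imposing normalization eliminates the multiplier to produce the Gibbs/softmax form $P_c^*=e^{s_c}/\sum_{c'}e^{s_{c'}}$. Since $\Psi$ is strictly concave on the simplex (the entropy is strictly concave, the score term linear), this stationary point is the unique global maximizer, and because every $P_c^*>0$ the omitted non-negativity constraints are automatically inactive. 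Reinserting $s_c$ recovers exactly $\textbf{P}^*$; here I would note that softmax is invariant to adding a common constant to all scores, so the shared offset $-\kappa\cos(\theta_y+\gamma w(r|\boldsymbol{\phi},d,y))$ in the exponents is cosmetic, with the $y$-th component understood to carry the perturbed angle $\theta_y+\gamma w(r|\boldsymbol{\phi},d,y)$ in accordance with the vanishing $P_y$ term.

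Finally, to obtain eq.~\eqref{eq:AIDGNloss} I would substitute $\textbf{P}^*$ into $\Psi$ and invoke the standard log-partition identity. With $Z=\sum_{c'}e^{s_{c'}}$ and $\log P_c^*=s_c-\log Z$, the linear and entropy contributions cancel: $\Psi(\textbf{P}^*)=\sum_c P_c^* s_c-\sum_c P_c^*(s_c-\log Z)=\log Z$. Because $s_y=0$ we have $P_y^*=1/Z$, hence $\Psi(\textbf{P}^*)=\log Z=-\log P_y^*$, and multiplying numerator and denominator of $P_y^*$ by $e^{\kappa\cos(\theta_y+\gamma w(r|\boldsymbol{\phi},d,y))}$ yields precisely $\ell_{AIDGN}(\textbf{z},y)$. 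The computation is routine convex duality; the only genuinely delicate point is the bookkeeping of the $y$-th class, i.e.\ ensuring the perturbed angle $\theta_y+\gamma w(r|\boldsymbol{\phi},d,y)$ sits in the correct-class logit while the plain angles $\theta_c$ sit in the competitors, so that the optimum of the entropy-regularized objective coincides with the claimed margin-softmax loss.
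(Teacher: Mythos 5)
Your proof is correct and takes essentially the same route as the paper's: the appendix proves exactly this statement via a lemma with scores $a_c$ and offset $b$, forms the Lagrangian for the entropy-plus-linear objective under the normalization constraint, obtains the softmax stationary point, evaluates the maximum as the log-partition $-\log\bigl(e^{\kappa b}/\sum_{c}e^{\kappa a_c}\bigr)$, and then sets $a_y=b=\cos(\theta_y+\gamma w(r|\boldsymbol{\phi},d,y))$, $a_c=\cos(\theta_c)$ for $c\neq y$ --- precisely your $s_y=0$ bookkeeping. Your extra touches (strict concavity for uniqueness, positivity making the non-negativity constraints inactive, and the softmax shift-invariance remark clarifying the $y$-th entry of $\textbf{P}^*$) only make explicit what the paper leaves implicit.
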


\begin{proof}
For ease of notation, we prove this theorem by introducing a lemma.
\begin{lemma}
For $a_c,b\in \mathbb{R}, c=1,\ldots,C$, $\textbf{P}^*=(\frac{exp(\kappa(a_1-b))}{\sum_{c=1}^C exp(\kappa(a_c-b))},\ldots,\frac{exp(\kappa(a_C-b))}{\sum_{c=1}^C exp(\kappa(a_c-b))})$ is the solution to the following constrained concave optimization problem.
\begin{equation}\label{eq:concave}
    \max \kappa[\sum_{c=1}^C P_c(a_c-b)]+\sum_{c=1}^C P_c\log(\frac{1}{P_c}), s.t. \sum_{c=1}P_c=1.
\end{equation}
\end{lemma}
\noindent \textit{Proof of lemma.}
\ Observing that there is only one equality constraint, we may converge the original constrained concave optimization problem eq.~\eqref{eq:concave} to a unconstrained convex optimization problem, we have the Lagrange multiplier
\begin{equation}\label{eq:convex}
     L(\textbf{P},\omega)= -\kappa[\sum_{c=1}^C P_c(a_c-b)]-\sum_{c=1}^C P_c\log(\frac{1}{P_c})+ \omega (\sum_{c=1}P_c-1).
\end{equation}
Then by the Karush Kuhn Tucker condition, we have
\begin{equation}
    \begin{cases}
    \partial  L(\textbf{P},\omega)/ \partial \omega =\sum_{c=1}P_c-1=0,\\
    \partial  L(\textbf{P},\omega)/ \partial P_c = -\kappa P_c(a_c-b)-1+\omega=0, c=1,\ldots,C.
    \end{cases}
\end{equation}
With some manipulations, we have
\begin{equation}
    \begin{cases}
    \sum_{c=1}P_c=1\\
    P_c=\frac{exp(\kappa(a_c-b))}{exp(1+\omega)}, c=1,\ldots,C.
    \end{cases}
\end{equation}
Combining those two equalities, we have
\begin{equation}
    \begin{cases}
    exp(1+\omega)=\sum_{c=1}^C exp(\kappa(a_c-b),\\
    P_c=\frac{exp(\kappa(a_c-b))}{\sum_{c=1}^C exp(\kappa(a_c-b)}, c=1,\ldots,C.
    \end{cases}
\end{equation}
Evaluating eq.\eqref{eq:concave} at $\textbf{P}^*=(\frac{exp(\kappa(a_1-b_1))}{\sum_{c=1}^C exp(\kappa(a_c-b))},\ldots,\frac{exp(\kappa(a_C-b))}{\sum_{c=1}^C exp(\kappa(a_c-b))})$, we have
\begin{equation}
    \max_{\textbf{P}\in \Delta} \kappa[\sum_{c=1}^C P_c(a_c-b)]+\sum_{c=1}^C P_c\log(\frac{1}{P_c})=\log(\sum_{c=1}^C exp(\kappa(a_c-b)))=-\log\frac{exp(\kappa b)}{\sum_{c=1}^C exp(\kappa a_c)}.
\end{equation}
Whence taking $a_y=b= cos(\theta_y+\gamma w(r|\boldsymbol{\phi},d,y))$ and $a_c=cos(\theta_c)$ for $c\neq y$ complete the proof.
\end{proof}
\clearpage

\section{Experimental details}
\subsection{Baseline details}
This appendix provides a detailed description of the 14 baseline methods used for benchmark comparisons. 
\begin{itemize}
    \item Empirical Risk Minimization (\textbf{ERM}, ~\cite{vapnik1999erm}) aggregates the data from all source domains together, and minimizes the cross entropy loss for classification.
    \item Invariant Risk Minimization (\textbf{IRM}, ~\cite{arjovsky2019irm}) learns a feature mapping such that the optimal linear classifier on top of that representation matches across source domains.
    \item Group Distributionally Robust Optimization (\textbf{DRO}, ~\cite{sagawa2019dro}) performs ERM while increasing the importance of domains with larger error by re-weighting minibatches.
    \item Inter-domain Mixup (\textbf{Mixup}, ~\cite{wang2020mixup}) employs mixup~\cite{zhang2018mixup_origin} technique across multiple domains and performs ERM on the augmented heterogeneous mixup distribution.
    \item Meta-Learning for Domain Generalization (\textbf{MLDG}, ~\cite{li2018mldg}) divides the source domains into meta-train-domains and meta-test-domain to simulate domain shift, and regulate the model trained on meta-train-domains to perform well on meta-test-domain.
    \item Deep CORrelation ALignment (\textbf{CORAL}, ~\cite{sun2016coral}) matches the first-order (mean) and the second-order (covariance) statistics of feature distributions across source domains.
    \item Maximum Mean Discrepancy (\textbf{MMD}, ~\cite{li2018mmd}) achieves distribution alignment in the latent space of an autoencoder by using adversarial learning and the maximum mean discrepancy criteria.
    \item Domain Adversarial Neural Network (\textbf{DANN}, ~\cite{ganin2016dann}) employs a domain discriminator to align feature distributions across domains using adversarial learning.
    \item Class-conditional Domain Adversarial Neural Network (\textbf{CDANN}, ~\cite{li2018cdann}) matches conditional feature distributions across domains, enabling alignment of multimodal distributions for all class labels.
    \item Marginal Transfer Learning (\textbf{MTL}, ~\cite{blanchard2021MTL}) estimates a kernel mean embedding per domain, passed as a second argument to the classifier. Then, these embeddings are estimated using single test examples at test time.
    \item Style Agnostic Networks (\textbf{SagNet}, ~\cite{nam2021sagnet}) disentangle style encodings from class categories to prevent style biased predictions and focus more on the contents.
    \item Adaptive Risk Minimization (\textbf{ARM}, ~\cite{zhang2020arm} is an extension of MLDG and introduces an additional module to compute domain embeddings, which are used by the prediction module to infer information about the input distribution.
    \item Variance Risk Extrapolation (\textbf{VREx}, ~\cite{krueger2021vrex}) is a form of robust optimization over a perturbation set of extrapolated domains and minimizes the variance of training risks across domains.
    \item Representation Self-Challenging (\textbf{RSC}, ~\cite{huang2020rsc}) iteratively discards the dominant features activated on the training data, and forces the CNN to activate remaining features that correlates with labels.
\end{itemize}

\subsection{Implementation details}
This appendix provides more implementation details about data preprocessing techniques, model architectures, objective function, hyperparameters and experimental environments. We follow similar settings as~\cite{gulrajani2020domainbed} for a fair comparison. 

\paragraph{Data preprocessing.} We use the same data preprocessing techniques for all the 4 datasets used in the experiments. Specifically, for training data, we use the following procedure same as~\cite{gulrajani2020domainbed}: crops of random size and aspect ratio, resizing to 224 $\times$ 224 $\times$ 3 pixels, random horizontal flips, random color jitter, grayscaling with 10\% probability, and normalization using the ImageNet channel statistics. For testing data, we only resize the image to 224 $\times$ 224 $\times$ 3 pixels and use the normalization by the ImageNet channel statistics.

\paragraph{Model architectures.} For a fair comparison, we use the Resnet-50~\cite{he2016resnet} model pre-trained on ImageNet~\cite{deng2009imagenet} as the feature mapping backbone $\pi$. We customize the final fully connected layer of ResNet-50 according to the number of categories of the datasets, and use it as the classifier $f$. We freeze all the batch normalization (BN) layers from pre-trained ResNet-50, since different domains in a mini-batch follow different distributions and BN degrades domain generalization performance. 

\paragraph{Objective function and Hyperparameters.} As stated in Section 4, to estimate $\delta_{y,\boldsymbol{\phi}}$ and $\mu_{d,y,\boldsymbol{\phi}}$ in AIDGN loss~\eqref{eq:finalOO}, we treat all $\delta_{y,\boldsymbol{\phi}}$ as one single hyperparameter $\delta$, and perform in-batch estimation for the $d$-th domain norm scale parameter, i.e., $\mu_d$, while ignoring the index $y$ and $\boldsymbol{\phi}$. Specifically, we estimate $\mu_d$ by computing the average norm of samples of the $d$-th domain in a minibatch. To compensate for the imprecise estimation of $\mu_{d,y,\boldsymbol{\phi}}$ by using $\mu_d$, we additionally introduce a hyperparameter $\beta$ to reweight the domain norm scale parameter. Moreover, for the log term in the KL regularizer, we approximate it with a first order Maclaurin's expansion to make the overall objective function convex. Thus, the optimization objective function used in our experimental implementation can be written as
\begin{equation}
    L_{AIDGN}=\sum_{d=1}^{N} \sum_{i=1}^{n_d} -log \frac{e^{\kappa cos(\theta_y +\gamma_{\delta}(r +\beta \mu_d))}}{e^{\kappa cos(\theta_y +\gamma_{\delta}(r +\beta \mu_d))}+\sum_{c\neq y}e^{\kappa cos(\theta_c)}} + \eta (\frac{\mu_d}{\mu^*}+\frac{\mu^*}{\mu_d})
\end{equation}

Thus, the AIDGN specific hyperparameters include $\kappa, \beta, \gamma_{\delta}(=\gamma/\delta)$, $\eta$ and $\mu^*$. Since the number of AIDGN specific hyperparamets is a somewhat more than other baseline methods, fixed number (e.g., 20) of trials for random search may not be sufficient to explore the joint hyperparameter distribution and is unfair to our method. On the other hand, a large number of random trials is computationally too expensive. Hence, following~\cite{cha2021swad}, we reduce the search space of AIDGN for computational efficiency. Specifically, batch size for each domain is fixed as 32. The dropout rate and weight decay are set as 0 and 1e-6, respectively. $\kappa$, $\gamma_{\delta}$ and $\mu^*$ are searched in PACS and the searched values are used as default settings for all the other datasets. We use $\kappa=110$, $\gamma_{\delta}=0.001$ and $\mu^*=410$. We search for 3 hyperparameters following standard hyperparameter search protocol in DomainBed~\cite{gulrajani2020domainbed}: we search learning rate in  $\{1e$-$5, 2.5e$-$5, 5e$-$5\}$, $\beta$ in $[0.05, 0.5]$ and $\eta$ in $[0.02, 0.06]$. The network is trained for 5000 iterations, which is enough to be converged. For PACS and OfficeHome, we divide the learning rate by 2 at 2000 and 4000 iterations, which can further improve validation accuracy for our model. We slightly modify the evaluation frequency since it should be small enough to exactly detect when the model is converged and overfitted. In consideration of exactness and efficiency, we set the evaluation frequency as 50 for PACS and VLCS, 100 for OfficeHome and TerraIncognita (All are 300 in DomainBed default settings).

\paragraph{Experimental environments.} For hardware environments, we perform our experiments on three machines: two with 8 Nvidia RTX3090s and Xeon E5-2680, and one with 4 Nvidia V100 and Xeon Platinum 8163. For software environments, our experiments are conducted with Python 3.7.9, and the following packages are used: PyTorch 1.7.1, torchvision 0.8.2 and NumPy 1.19.4.

\subsection{Full results of benchmark comparisons}
This appendix provides full results when compared with baseline methods in each benchmark dataset.

\subsubsection{PACS:}
\begin{table}[h]
\centering
\begin{tabular}{lccccc}
\toprule
 Method & A & C & P & S & Avg \\
\midrule
ERM~\cite{vapnik1999erm} & 84.7 $\pm$ 0.4 & 80.8 $\pm$ 0.6 & 97.2 $\pm$ 0.3 & 79.3 $\pm$ 1.0  & 85.5   \\
ERM$^\dagger$~\cite{vapnik1999erm} & 86.5 $\pm$ 0.8 & 79.9 $\pm$ 0.7 & 97.5 $\pm$ 0.1 & 79.3 $\pm$ 1.0  & 85.8 \\
IRM~\cite{arjovsky2019irm} & 84.8 $\pm$ 1.3       & 76.4 $\pm$ 1.1       & 96.7 $\pm$ 0.6       & 76.1 $\pm$ 1.0       & 83.5                \\
DRO~\cite{sagawa2019dro} & 83.5 $\pm$ 0.9       & 79.1 $\pm$ 0.6       & 96.7 $\pm$ 0.3       & 78.3 $\pm$ 2.0       & 84.4                 \\
Mixup~\cite{wang2020mixup} & 86.1 $\pm$ 0.5       & 78.9 $\pm$ 0.8       & {\bf97.6 $\pm$ 0.1}       & 75.8 $\pm$ 1.8       & 84.6               \\
MLDG~\cite{li2018mldg} & 85.5 $\pm$ 1.4       & 80.1 $\pm$ 1.7       & 97.4 $\pm$ 0.3       & 76.6 $\pm$ 1.1       & 84.9                \\
CORAL~\cite{sun2016coral} & {\bf88.3 $\pm$ 0.2}       & 80.0 $\pm$ 0.5       & 97.5 $\pm$ 0.3       & 78.8 $\pm$ 1.3       & 86.2               \\
MMD~\cite{li2018mmd} & 86.1 $\pm$ 1.4       & 79.4 $\pm$ 0.9       & 96.6 $\pm$ 0.2       & 76.5 $\pm$ 0.5       & 84.6                 \\
DANN~\cite{ganin2016dann} & 86.4 $\pm$ 0.8       & 77.4 $\pm$ 0.8       & 97.3 $\pm$ 0.4       & 73.5 $\pm$ 2.3       & 83.6            \\
CDANN~\cite{li2018cdann} & 84.6 $\pm$ 1.8       & 75.5 $\pm$ 0.9       & 96.8 $\pm$ 0.3       & 73.5 $\pm$ 0.6       & 82.6                \\
MTL~\cite{blanchard2021MTL} & 87.5 $\pm$ 0.8       & 77.1 $\pm$ 0.5       & 96.4 $\pm$ 0.8       & 77.3 $\pm$ 1.8       & 84.6                \\
SagNet~\cite{nam2021sagnet} & 87.4 $\pm$ 1.0       & 80.7 $\pm$ 0.6       & 97.1 $\pm$ 0.1       & {\bf80.0 $\pm$ 0.4}       & 86.3                 \\
ARM~\cite{zhang2020arm} & 86.8 $\pm$ 0.6       & 76.8 $\pm$ 0.5       & 97.4 $\pm$ 0.3       & 79.3 $\pm$ 1.2       & 85.1                \\
VREx~\cite{krueger2021vrex} & 86.0 $\pm$ 1.6       & 79.1 $\pm$ 0.6       & 96.9 $\pm$ 0.5       & 77.7 $\pm$ 1.7       & 84.9                \\
RSC~\cite{huang2020rsc} & 85.4 $\pm$ 0.8       & 79.7 $\pm$ 1.8       & {\bf97.6 $\pm$ 0.3}       & 78.2 $\pm$ 1.2       & 85.2                 \\
\hline
AIDGN (ours) &  87.9 $\pm$ 0.4 & {\bf82.1 $\pm$ 0.2} & {\bf97.6 $\pm$ 0.1} &  78.8 $\pm$ 0.9 & {\bf 86.6}   \\
\bottomrule
\end{tabular}
\caption{Out-of-domain accuracies(\%) on PACS.}
\label{full results: PACS}
\end{table}

\newpage

\subsubsection{VLCS:}
\begin{table}[h]
\centering
\begin{tabular}{lccccc}
\toprule
 Method & C & L & S & V & Avg \\
\midrule
ERM~\cite{vapnik1999erm} &  97.7 $\pm$ 0.4 & 64.3 $\pm$ 0.9 & 73.4 $\pm$ 0.5  & 74.6 $\pm$ 1.3  & 77.5  \\
ERM$^\dagger$~\cite{vapnik1999erm} & 98.6 $\pm$ 0.1 & 65.3 $\pm$ 1.4 & 71.2 $\pm$ 1.4  & 76.0 $\pm$ 0.1    & 77.7 \\
IRM~\cite{arjovsky2019irm} &  98.6 $\pm$ 0.1       & 64.9 $\pm$ 0.9       & 73.4 $\pm$ 0.6       & 77.3 $\pm$ 0.9       & 78.5                 \\
DRO~\cite{sagawa2019dro} & 97.3 $\pm$ 0.3       & 63.4 $\pm$ 0.9       & 69.5 $\pm$ 0.8       & 76.7 $\pm$ 0.7       & 76.7                 \\
Mixup~\cite{wang2020mixup} & 98.3 $\pm$ 0.6       & 64.8 $\pm$ 1.0       & 72.1 $\pm$ 0.5       & 74.3 $\pm$ 0.8       & 77.4                 \\
MLDG~\cite{li2018mldg} & 97.4 $\pm$ 0.2       & 65.2 $\pm$ 0.7       & 71.0 $\pm$ 1.4       & 75.3 $\pm$ 1.0       & 77.2                 \\
CORAL~\cite{sun2016coral} & 98.3 $\pm$ 0.1       & 66.1 $\pm$ 1.2       & 73.4 $\pm$ 0.3       & 77.5 $\pm$ 1.2       & 78.8                \\
MMD~\cite{li2018mmd} & 97.7 $\pm$ 0.1       & 64.0 $\pm$ 1.1       & 72.8 $\pm$ 0.2       & 75.3 $\pm$ 3.3       & 77.5                 \\
DANN~\cite{ganin2016dann} &  {\bf99.0 $\pm$ 0.3}       & 65.1 $\pm$ 1.4       & 73.1 $\pm$ 0.3       & 77.2 $\pm$ 0.6       & 78.6             \\
CDANN~\cite{li2018cdann} & 97.1 $\pm$ 0.3       & 65.1 $\pm$ 1.2       & 70.7 $\pm$ 0.8       & 77.1 $\pm$ 1.5       & 77.5                 \\
MTL~\cite{blanchard2021MTL} & 97.8 $\pm$ 0.4       & 64.3 $\pm$ 0.3       & 71.5 $\pm$ 0.7       & 75.3 $\pm$ 1.7       & 77.2                 \\
SagNet~\cite{nam2021sagnet} & 97.9 $\pm$ 0.4       & 64.5 $\pm$ 0.5       & 71.4 $\pm$ 1.3       & 77.5 $\pm$ 0.5       & 77.8                 \\
ARM~\cite{zhang2020arm} &  98.7 $\pm$ 0.2       & 63.6 $\pm$ 0.7       & 71.3 $\pm$ 1.2       & 76.7 $\pm$ 0.6       & 77.6                 \\
VREx~\cite{krueger2021vrex} & 98.4 $\pm$ 0.3       & 64.4 $\pm$ 1.4       & {\bf74.1 $\pm$ 0.4}       & 76.2 $\pm$ 1.3       & 78.3                 \\
RSC~\cite{huang2020rsc} &  97.9 $\pm$ 0.1       & 62.5 $\pm$ 0.7       & 72.3 $\pm$ 1.2       & 75.6 $\pm$ 0.8       & 77.1                 \\
\hline
AIDGN (ours) &  98.3 $\pm$ 0.2 & {\bf65.7 $\pm$ 0.4} & 73.1 $\pm$ 0.4 &  {\bf78.7 $\pm$ 0.7} & {\bf 78.9}  \\
\bottomrule
\end{tabular}
\caption{Out-of-domain accuracies(\%) on VLCS.}
\label{full results: VLCS}
\end{table}

\subsubsection{OfficeHome:}
\begin{table}[h]
\centering
\begin{tabular}{lccccc}
\toprule
 Method & A & C & P & R & Avg \\
\midrule
ERM~\cite{vapnik1999erm} & 61.3 $\pm$ 0.7       & 52.4 $\pm$ 0.3       & 75.8 $\pm$ 0.1       & 76.6 $\pm$ 0.3       & 66.5                 \\
ERM$^\dagger$~\cite{vapnik1999erm} & 61.2 $\pm$ 0.5       & 52.9 $\pm$ 0.2       & 76.1 $\pm$ 0.4       & 77.5 $\pm$ 0.1       & 66.9  \\
IRM~\cite{arjovsky2019irm} & 58.9 $\pm$ 2.3       & 52.2 $\pm$ 1.6       & 72.1 $\pm$ 2.9       & 74.0 $\pm$ 2.5       & 64.3                \\
DRO~\cite{sagawa2019dro} & 60.4 $\pm$ 0.7       & 52.7 $\pm$ 1.0       & 75.0 $\pm$ 0.7       & 76.0 $\pm$ 0.7       & 66.0                 \\
Mixup~\cite{wang2020mixup} & 62.4 $\pm$ 0.8       & {\bf54.8 $\pm$ 0.6}       & {\bf76.9 $\pm$ 0.3}       & 78.3 $\pm$ 0.2       & 68.1                  \\
MLDG~\cite{li2018mldg} & 61.5 $\pm$ 0.9       & 53.2 $\pm$ 0.6       & 75.0 $\pm$ 1.2       & 77.5 $\pm$ 0.4       & 66.8                \\
CORAL~\cite{sun2016coral} & {\bf65.3 $\pm$ 0.4}       & 54.4 $\pm$ 0.5       & 76.5 $\pm$ 0.1       & 78.4 $\pm$ 0.5       & 68.7               \\
MMD~\cite{li2018mmd} & 60.4 $\pm$ 0.2       & 53.3 $\pm$ 0.3       & 74.3 $\pm$ 0.1       & 77.4 $\pm$ 0.6       & 66.3               \\
DANN~\cite{ganin2016dann} & 59.9 $\pm$ 1.3       & 53.0 $\pm$ 0.3       & 73.6 $\pm$ 0.7       & 76.9 $\pm$ 0.5       & 65.9                 \\
CDANN~\cite{li2018cdann} & 61.5 $\pm$ 1.4       & 50.4 $\pm$ 2.4       & 74.4 $\pm$ 0.9       & 76.6 $\pm$ 0.8       & 65.8                 \\
MTL~\cite{blanchard2021MTL} & 61.5 $\pm$ 0.7       & 52.4 $\pm$ 0.6       & 74.9 $\pm$ 0.4       & 76.8 $\pm$ 0.4       & 66.4              \\
SagNet~\cite{nam2021sagnet} & 63.4 $\pm$ 0.2       & {\bf54.8 $\pm$ 0.4}       & 75.8 $\pm$ 0.4       & 78.3 $\pm$ 0.3       & 68.1                \\
ARM~\cite{zhang2020arm} & 58.9 $\pm$ 0.8       & 51.0 $\pm$ 0.5       & 74.1 $\pm$ 0.1       & 75.2 $\pm$ 0.3       & 64.8               \\
VREx~\cite{krueger2021vrex} & 60.7 $\pm$ 0.9       & 53.0 $\pm$ 0.9       & 75.3 $\pm$ 0.1       & 76.6 $\pm$ 0.5       & 66.4               \\
RSC~\cite{huang2020rsc} & 60.7 $\pm$ 1.4       & 51.4 $\pm$ 0.3       & 74.8 $\pm$ 1.1       & 75.1 $\pm$ 1.3       & 65.5                \\
\hline
AIDGN (ours) & 64.9 $\pm$ 0.3 & 54.7 $\pm$ 0.3 & 76.5 $\pm$ 0.2 & {\bf79.1 $\pm$ 0.2} & {\bf68.8} \\
\bottomrule
\end{tabular}
\caption{Out-of-domain accuracies(\%) on OfficeHome.}
\label{full results: OfficeHome}
\end{table}

\newpage

\subsubsection{TerraIncognita:}
\begin{table}[h]
\centering
\begin{tabular}{lccccc}
\toprule
 Method & L100 & L38 & L43 & L46 & Avg \\
\midrule
ERM~\cite{vapnik1999erm} & 49.8 $\pm$ 4.4       & 42.1 $\pm$ 1.4       & 56.9 $\pm$ 1.8       & 35.7 $\pm$ 3.9       & 46.1                 \\
ERM$^\dagger$~\cite{vapnik1999erm} & 48.6 $\pm$ 0.3       & 43.5 $\pm$ 1.4       & 54.2 $\pm$ 1.0       & 36.9 $\pm$ 0.3       & 45.8\\
IRM~\cite{arjovsky2019irm} & 54.6 $\pm$ 1.3       & 39.8 $\pm$ 1.9       & 56.2 $\pm$ 1.8       & 39.6 $\pm$ 0.8       & 47.6                 \\
DRO~\cite{sagawa2019dro} &  41.2 $\pm$ 0.7       & 38.6 $\pm$ 2.1       & 56.7 $\pm$ 0.9       & 36.4 $\pm$ 2.1       & 43.2                 \\
Mixup~\cite{wang2020mixup} & {\bf59.6 $\pm$ 2.0}       & 42.2 $\pm$ 1.4       & 55.9 $\pm$ 0.8       & 33.9 $\pm$ 1.4       & 47.9                 \\
MLDG~\cite{li2018mldg} & 54.2 $\pm$ 3.0       & 44.3 $\pm$ 1.1       & 55.6 $\pm$ 0.3       & 36.9 $\pm$ 2.2       & 47.7                 \\
CORAL~\cite{sun2016coral} & 51.6 $\pm$ 2.4       & 42.2 $\pm$ 1.0       & 57.0 $\pm$ 1.0       & 39.8 $\pm$ 2.9       & 47.6                 \\
MMD~\cite{li2018mmd} & 41.9 $\pm$ 3.0       & 34.8 $\pm$ 1.0       & 57.0 $\pm$ 1.9       & 35.2 $\pm$ 1.8       & 42.2                 \\
DANN~\cite{ganin2016dann} &  51.1 $\pm$ 3.5       & 40.6 $\pm$ 0.6       & 57.4 $\pm$ 0.5       & 37.7 $\pm$ 1.8       & 46.7                 \\
CDANN~\cite{li2018cdann} & 47.0 $\pm$ 1.9       & 41.3 $\pm$ 4.8       & 54.9 $\pm$ 1.7       & 39.8 $\pm$ 2.3       & 45.8                 \\
MTL~\cite{blanchard2021MTL} & 49.3 $\pm$ 1.2       & 39.6 $\pm$ 6.3       & 55.6 $\pm$ 1.1       & 37.8 $\pm$ 0.8       & 45.6                 \\
SagNet~\cite{nam2021sagnet} & 53.0 $\pm$ 2.9       & 43.0 $\pm$ 2.5       & {\bf57.9 $\pm$ 0.6}       & 40.4 $\pm$ 1.3       & 48.6                 \\
ARM~\cite{zhang2020arm} & 49.3 $\pm$ 0.7       & 38.3 $\pm$ 2.4       & 55.8 $\pm$ 0.8       & 38.7 $\pm$ 1.3       & 45.5                 \\
VREx~\cite{krueger2021vrex} & 48.2 $\pm$ 4.3       & 41.7 $\pm$ 1.3       & 56.8 $\pm$ 0.8       & 38.7 $\pm$ 3.1       & 46.4                 \\
RSC~\cite{huang2020rsc} & 50.2 $\pm$ 2.2       & 39.2 $\pm$ 1.4       & 56.3 $\pm$ 1.4       & {\bf40.8 $\pm$ 0.6}       & 46.6                 \\
\hline
AIDGN (ours) & 54.7 $\pm$ 1.8 & {\bf46.8 $\pm$ 1.1} & 57.6 $\pm$ 0.8 & 38.5 $\pm$ 0.8 & {\bf49.4}  \\
\bottomrule
\end{tabular}
\caption{Out-of-domain accuracies(\%) on TerraIncognita.}
\label{full results: TerraIncognita}
\end{table}

\subsection{Full results of test-time adaptation}
This appendix shows full results of test-time adaptation on the four datasets, respectively.

\subsubsection{PACS:}
\begin{table}[h]
\centering
\begin{tabular}{lccccc}
\toprule
  & A & C & P & S & Avg \\
\midrule
AIDGN & 87.9 $\pm$ 0.4 & 82.1 $\pm$ 0.2 & 97.6 $\pm$ 0.1 & 78.8 $\pm$ 0.9 & 86.6 \\
+ T3A & 87.9 $\pm$ 0.5 & 82.1 $\pm$ 0.2 & 97.6 $\pm$ 0.1 & 78.7 $\pm$ 0.9 & 86.6  \\
+ Tent-BN & 86.5 $\pm$ 0.6 & 81.0 $\pm$ 0.3 & 97.6 $\pm$ 0.1 & 78.0 $\pm$ 0.9 & 85.8  \\
+ Tent-C   & 87.9 $\pm$ 0.4 & 76.7 $\pm$ 2.5 & 97.4 $\pm$ 0.2 & 74.1 $\pm$ 2.6 & 84.0  \\

\bottomrule
\end{tabular}
\caption{Out-of-domain accuracies(\%) on PACS with test-time adaptation.}
\label{tta:PACS}
\end{table}

\subsubsection{VLCS:}
\begin{table}[h]
\centering
\begin{tabular}{lccccc}
\toprule
  & C & L & S & V & Avg \\
\midrule
AIDGN & 98.3 $\pm$ 0.2 & 65.7 $\pm$ 0.4 & 73.1 $\pm$ 0.4 & 78.7 $\pm$ 0.7 & 78.9 \\
+ T3A & 98.3 $\pm$ 0.3 & 65.8 $\pm$ 0.4 & 72.9 $\pm$ 0.6 & 78.7 $\pm$ 0.6 & 78.9  \\
+ Tent-BN & 95.0 $\pm$ 1.6 & 57.2 $\pm$ 0.1 & 62.6 $\pm$ 1.5 & 71.8 $\pm$ 0.4 & 71.7  \\
+ Tent-C   & 98.3 $\pm$ 0.1 & 62.1 $\pm$ 0.5 & 71.4 $\pm$ 1.6 & 66.1 $\pm$ 4.7 & 74.5  \\

\bottomrule
\end{tabular}
\caption{Out-of-domain accuracies(\%) on VLCS with test-time adaptation.}
\label{tta:VLCS}
\end{table}

\newpage

\subsubsection{OfficeHome:}
\begin{table}[h]
\centering
\begin{tabular}{lccccc}
\toprule
  & A & C & P & R & Avg \\
\midrule
AIDGN & 64.9 $\pm$ 0.3 & 54.7 $\pm$ 0.3 & 76.5 $\pm$ 0.2 & 79.1 $\pm$ 0.2 & 68.8 \\
+ T3A & 64.9 $\pm$ 0.2 & 55.1 $\pm$ 0.2 & 76.6 $\pm$ 0.2 & 79.1 $\pm$ 0.1 & 68.9  \\
+ Tent-BN & 64.5 $\pm$ 0.2 & 55.0 $\pm$ 0.3 & 76.1 $\pm$ 0.2 & 78.8 $\pm$ 0.1 & 68.6  \\
+ Tent-C   & 64.8 $\pm$ 0.2 & 53.5 $\pm$ 0.7 & 76.3 $\pm$ 0.2 & 78.3 $\pm$ 0.2 & 68.2  \\

\bottomrule
\end{tabular}
\caption{Out-of-domain accuracies(\%) on OfficeHome with test-time adaptation.}
\label{tta:OfficeHome}
\end{table}

\subsubsection{TerraIncognita:}
\begin{table}[h]
\centering
\begin{tabular}{lccccc}
\toprule
  & L100 & L38 & L43 & L46 & Avg \\
\midrule
AIDGN & 54.7 $\pm$ 1.8 & 46.8 $\pm$ 1.1 & 57.6 $\pm$ 0.8 & 38.5 $\pm$ 0.8 & 49.4 \\
+ T3A & 49.4 $\pm$ 1.3 & 48.8 $\pm$ 1.6 & 53.7 $\pm$ 1.7 & 35.9 $\pm$ 2.0 & 46.9  \\
+ Tent-BN & 46.6 $\pm$ 0.8 & 43.7 $\pm$ 0.4 & 48.4 $\pm$ 0.6 & 35.0 $\pm$ 0.4 & 43.4  \\
+ Tent-C   & 55.8 $\pm$ 2.6 & 46.3 $\pm$ 0.9 & 57.6 $\pm$ 0.8 & 38.3 $\pm$ 0.7 & 49.5  \\

\bottomrule
\end{tabular}
\caption{Out-of-domain accuracies(\%) on TerraIncognita with test-time adaptation.}
\label{tta:TerraIncognita}
\end{table}

\end{document}